\documentclass[draftcls,onecolumn,12pt]{IEEEtran}

\usepackage{amsmath,amstext,amssymb,epsf}
\usepackage{fullpage,latexsym}
\usepackage{epsfig}
\sloppy

\numberwithin{equation}{section} 

 \newtheorem{lemma}{Lemma}[section]
 \newtheorem{theorem}[lemma]{Theorem}

 \newtheorem{definition}[lemma]{Definition}
 \newtheorem{rem}[lemma]{Remark}
\newenvironment{remark}{\begin{rem}}{\hspace*{\fill}$\diamondsuit$\end{rem}}
 \newtheorem{ex}[lemma]{Example}

\newcommand{\NCD}{\textsc {NCD} }

\newcommand{\commentout}[1]{}
\renewcommand{\emptyset}{\varnothing}

\setlength{\topmargin}{-0.6in}
\setlength{\textwidth}{6.4in}
\setlength{\oddsidemargin}{0.1in}
\setlength{\evensidemargin}{0.1in}
\setlength{\textheight}{9.4in}

\begin{document}

\title{A Fast Quartet Tree Heuristic for Hierarchical Clustering}
\author{Rudi Cilibrasi and Paul M.B. Vit\'{a}nyi
\thanks{
Rudi Cilibrasi is with the Center for Mathematics
and Computer Science (CWI). Address:
CWI, Science Park 123,
1098 XG Amsterdam, The Netherlands.
Email: {\tt cilibrar@gmail.com}.
 Part of his work was supported by
 the Netherlands BSIK/BRICKS project, and by NWO  project 612.55.002.
Paul Vit\'{a}nyi is with the Center for Mathematics and Computer Science (CWI),
and the University of Amsterdam.
Address:
CWI, Science Park 123,
1098 XG Amsterdam, The Netherlands.
Email: {\tt Paul.Vitanyi@cwi.nl}.
He was supported in part by the
EU project RESQ, IST--2001--37559, the NoE QUIPROCONE
IST--1999--29064,
the ESF QiT Programmme, and the EU NoE PASCAL II,
the Netherlands BSIK/BRICKS project.
}}

\maketitle

\begin{abstract}
The Minimum Quartet Tree Cost 
problem is to construct
an optimal weight tree
from the $3{n \choose 4}$ weighted quartet topologies on $n$ objects,
where optimality means that the summed weight of the embedded quartet topologies
is optimal (so it can be the case that the optimal tree embeds all quartets
as nonoptimal topologies). 
We present a Monte Carlo heuristic, based on randomized hill climbing,
for approximating the optimal weight tree, given the quartet topology weights.
The method
repeatedly transforms a dendrogram,
with all objects involved as leaves, achieving
a monotonic approximation to the exact single globally optimal tree.
The problem and the solution heuristic 
has been extensively used for general hierarchical clustering
of nontree-like (non-phylogeny) data
in various domains and across domains with heterogeneous data.
We also present a greatly improved heuristic, reducing the running time 
by a factor of order a thousand to ten thousand.
All this is implemented and available, as part of the CompLearn package.
We compare performance and running time of the original
and improved versions with those of UPGMA, BioNJ, and NJ,
as implemented in the SplitsTree package on genomic data for which the latter
are optimized.

{\em Keywords}---
Data and knowledge visualization,
Pattern matching--Clustering--Algorithms/Similarity measures, 
Pattern matching--Applications,

{\em Index Terms}---
hierarchical clustering,
global optimization,
Monte Carlo method,
quartet tree,
randomized hill-climbing,
\end{abstract}

\section{Introduction}
\label{sect.intro}
If we want to find structure in a collection of data, then we can organize
the data into clusters such that the data in the same cluster are similar
and the data in different clusters are dissimilar. 
In general
there is no best criterion to determine the clusters. One approach is to
let the user determine the criterion that suits his or hers needs.  
Alternatively, we can let the data itself 
determine ``natural'' clusters.
Since it is not likely that natural data determines 
unequivocal disjoint clusters,
it is common to hierarchically cluster the data \cite{DHS}. 

\subsection{Hierarchical Clustering and the Quartet Method}

In cluster analysis there are basically two methods for hierarchical
clustering. In the bottom-up approach initially every data item constitutes
its own cluster, and pairs of clusters are merged as one moves up the hierarchy.
In the top-down approach the set of all data constitutes the initial cluster, 
and splits are performed recursively as one moves down the hierarchy.
Generally, the merges and splits are determined in a greedy manner. 
The main disadvantages of the bottom-up and top-down methods 
are firstly that
they do not scale well because the time complexity 
is nonlinear in terms of the number of objects,
and secondly that they can 
never undo what was done before.
Thus, they are lacking in robustness and uniqueness 
since the results depend on the
earlier decisions. In contrast, the method which we propose here 
is robust and gives unique results in the limit.

The results of hierarchical clustering are usually 
presented in a dendrogram \cite{Jo67}. For a small number of
data items this has the added advantage that
the relations among the data can be subjected to visual inspection.
Such a dendrogram is a ternary tree where the leaves or external
nodes are
the basic data elements. Two leaves are connected to an internal node if
they are more similar to one another than to the other data elements.
Dendrograms are used in computational biology to illustrate the 
clustering of genes or the evolutionary tree of species. In the latter
case we want a rooted tree to see the order in which groups
of species split off from one another. 

In biology dendrograms (phylogenies) are ubiquitous, 
and methods to reconstruct a
rooted dendrogram from a matrix of pairwise distances abound. One of these
methods is quartet tree reconstruction as explained in
Section~\ref{sect.quartet}. Since the biologists assume there
is a single right tree (the data are ``tree-like'')
they also assume one quartet topology, of the
three possible ones of every quartet, is the correct one. Hence their
aim is to embed (Definition~\ref{def.consistent}) 
the largest number of correct quartet topologies 
in the target tree.

\subsection{Related Work}\label{sect.relwork}
The quartet tree method is described in Section~\ref{sect.quartet}.
A much-used heuristic called the Quartet Puzzling problem was proposed in
\cite{SvH96}. There, 
the quartet topologies are provided
with a probability value, and for each quartet the topology with
the highest probability is selected (randomly, if there are more than one)
as the maximum-likelihood optimal
topology.
The goal is to find a dendrogram
that embeds these optimal quartet
topologies. In the biological setting it is assumed that the observed genomic data
are the result of an evolution in time, and hence can be represented as the leaves
of an evolutionary tree. Once we obtain a proper probabilistic evolutionary
model to quantify the evolutionary relations between the data we can search 
for the true tree. 
In a quartet method one determines the most likely quartet topology
for every quartet
under the given assumptions, and then searches for a ternary tree (a dendrogram) that
contains as many of the most likely 
quartets as possible. By Lemma~\ref{lem.unique},
a dendrogram is uniquely
determined by the set of 
embedded quartet topologies that
it contains. These quartet topologies are said to be consistent with the tree
they are embedded in.  Thus, if all quartets are embedded in the tree
in their most likely topologies, then it is certain 
that this tree is the
optimal matching tree for the given quartet topologies input data.
In practice we often find that the set
of given quartet topologies
are inconsistent or incomplete.  Inconsistency makes it impossible to
match the entire input quartet topology set even for the optimal, best
matching tree.  Incompleteness threatens the uniqueness of the optimal
tree solution.  Quartet topology inference methods also suffer from
practical problems when applied to real world data.  In many
biological ecosystems there is reticulation that makes the relations
less tree-like and more network-like. The data can be corrupted and
the observational process pollutes and makes errors.

Thus, one has to settle for
embedding as many most likely quartet
topologies as possible, do error correction
on the quartet topologies, and so on.  
Hence in phylogeny, finding
the best tree according to an optimization criterion may not be
the same thing as inferring the tree underlying the data set (which we tend to
believe, but usually
cannot prove, to exist). For $n$ objects, there are
$(2n-5)!! = (2n-5) \times  (2n-3) \times \cdots \times 3$ unrooted dendrograms.
To find the optimal tree
turns out to be
NP--hard, see Section \ref{sect.hard}, and hence infeasible in general.
There are two main avenues that have
been taken:

(i) Incrementally grow the tree in random order by stepwise addition
of objects in the locally optimal way, repeat this for different
object orders, and add agreement values on the branches, like DNAML
\cite{Fe81}, or Quartet Puzzling \cite{SvH96}.
These methods are fast, but suffer from the usual bottom-up problem:
a wrong decision early on cannot be corrected later. Another
 possible problem is as follows.
Suppose we have just 32 items. With Quartet Puzzling we incrementally
construct a quartet tree from a randomly ordered list of
elements, where each next element is optimally connected
to the current tree comprising the previous elements. We repeat
this process for, say, 1000 permutations. Subsequently, we
look for percentage agreement of subtrees common to all such trees.
But the number of permutations is about $2^{160}$, so why would
the incrementally locally optimal trees derived from 1000 random permutations be
a representative sample from which we can conclude anything about
the globally optimal tree?

(ii) Approximate the global optimum monotonically or compute it,
using a geometric algorithm or
dynamic programming \cite{BCGOP98}, linear programming \cite{WDGG05},
or semi-definite programming \cite{SWR07}.
These latter methods, other methods, 
as well as methods related to the Minimum Quartet Consistency (MQC) problem
(Definition~\ref{def.mqc}),
cannot handle more than 15--30 objects \cite{WDGG05,LTM05,PBE04,BJKLW99,SWR07}
directly,
even while
using farms of desktops.
To handle more objects one needs to construct a supertree from the constituent
quartet trees for subsets of the original data sets, \cite{RMWW04}, as
in \cite{LTM05,PBE04}, incurring again the bottom-up problem of being
unable to correct earlier decisions.

\subsection{Present Work}

The Minimum Quartet Tree Cost (MQTC) problem is proposed in 
Section~\ref{sect.mqtc} (Definition~\ref{def.mqtc}). 
It is a quartet method for 
general hierarchical clustering of nontree-like data
in non-biological areas that is also applicable to
phylogeny construction in biology.
In contrast to the MQC problem,
it is used for general hierarchical clustering. It does not suppose
that for every quartet a single quartet topology is the correct one.
Instead, we aim at optimizing the summed quartet topology costs. 
If we determine the quartet topology costs from a measure of
distance, then
the data themselves are not required: 
all that is used is a distance matrix.
To solve it we present 
a computational heuristic that is 
a Monte Carlo method, as opposed to deterministic methods
like local search, Section~\ref{sect.mc}. Our method is based on
a fast randomized hill-climbing heuristic of a new global
optimization criterion. Improvements that dramatically decrease 
the running speed
are given in Section~\ref{sect.previous}.
  The algorithm does not address the problem of how
to obtain the quartet topology weights from sequence data
\cite{Ke98,LBCKKZ01,Li03}, but takes as input
the weights of all quartet topologies and executes the step
of how to reconstruct the hierarchical clustering from there.
Alternatively, we can start from the distance matrix and 
construct the quartet topology cost as the sum of the distances between
the siblings, dramatically speeding up the
heuristic as in Section~\ref{sect.previous}. 
Since the method globally optimizes the tree it does not suffer from the 
the disadvantage treated in Item (i) of Section~\ref{sect.relwork}.
The running time is much faster than that of the methods treated
in Item (ii) of Section~\ref{sect.relwork}. It can also handle much larger
trees of at least 300 objects. 

The algorithm presented produces a sequence of candidate trees with the objects
as leaves. Each such candidate tree is scored as to how well the tree
represents the information in the weighted
quartet topologies on a scale of 0 to 1. If a new candidate scores better
than the previous best candidate, the former becomes the new best candidate.
The globally optimal tree has the highest score, so the algorithm
monotonically approximates the global optimum. The algorithm terminates
on a given termination condition.

In contrast to the general case of bottom-up and top-down methods, the new 
quartet method can undo what was done before and eventually reaches 
a global optimum. It does not assume that the data are tree like (and hence
that there is a single ``right tree''), but simply hierarchically clusters
data items in every domain. The scalability is improved by the 
reduction of the running time from $\Omega(n^4)$ per generation 
in the original version (with
an implementation of at least $O(n^5)$) to
$O(n^3)$ per generation
 in the current optimised version in  Section~\ref{sect.previous}
and implemented in CompLearn \cite{Ci03} from version 1.1.3 onward.
(Here $n$ is the number of data items.) Recently,
in \cite{CDGKP08} several alternative approaches to the 
here-introduced solution heuristic
are proposed. Some of the newly introduced heuristics
perform better both in results and running times than 
our old implementation. However, even the best heuristic in \cite{CDGKP08}
appears to have a slower running time for natural data 
(with $n=32$ typically over
50\%)
than the current version of our algorithm (CompLearn version
1.1.3 or later.)

In Section~\ref{sect.ncd} we treat compression-based distances 
and previous experiments with the MQTC heuristic 
using the CompLearn software. In Section~\ref{sect.splitstree}
compare performance and running time of MQTC heuristic
in CompLearn versions 0.9.7 
and 1.1.3 (before and after the speedup
in Section~\ref{sect.previous}) with those of other modern methods.
These are UPGMA, BioNJ, and NJ,
as implemented in the SplitsTree version 4.6.
We consider artificial and natural data sets. 
Note that biological packages like SplitsTree assume
tree-like data and are not designed to deal with arbitrary hierarchical
clustering like the MQTC heuristic. The artificial and natural data sets  
we use are tree structured.
Thus, the comparison is unfair to the new MQTC heuristic.
 
\subsection{Origin and Computational Complexity}
The MQTC problem and heuristic
were originally  proposed in 
\cite{CVW03,CV04,CV07}.
There, the main focus is on compression-based distances, but
to visually present the results in tree form 
we focused on a quartet method
for tree reconstruction.
We also believed such a quartet tree 
method to be more sensitive and objective 
than other methods. The available
quartet tree methods were too slow
when they were exact or global, and too inaccurate or uncertain
when they were statistical incremental. They
also addressed only biological phylogeny. Hence, 
we developed a new approach
aimed at general hierarchical clustering.
This approach is not a top-down or bottom-up method 
that can be caught in
a local optimum. In the above references the approach is described
as an auxiliary notion in one or two pages. It is a 
major new method to do general hierarchical clustering.
Here we give the first complete treatment. 

Some details of the MQTC problem, its computational hardness,
and our heuristic for its solution, are as follows. 
The goal is to use a quartet
method to obtain high-quality
hierarchical clustering of data from arbitrary (possibly heterogeneous)
domains, not necessarily only biological phylogeny data.
Traditional quartet methods derive from biology.
There, one  assumes that there exists
a true evolutionary tree, and the aim is to embed as many
optimal quartet topologies as is possible. In the new method for
general hierarchical clustering, for $n$ objects
we consider all $3{n \choose 4}$ possible quartet topologies, each with
a given weight. Our goal is to find the tree such that the summed weights
of the embedded quartet topologies is optimal. We develop a 
randomized hill-climbing heuristic that
monotonically approximates this optimum, and a figure of merit
(Definition~\ref{def.st}) that quantifies
the quality of the best current candidate tree on a linear scale.
We give an explicit proof of
NP-hardness (Theorem~\ref{theo.NPhard}) of the MQTC problem.
Moreover, if a polynomial time approximation scheme (PTAS)
(Definition~\ref{def.ptas})  for the problem
exists, then
P=NP (Theorem~\ref{theo.ptas}).
Given the
NP--hardness of phylogeny reconstruction in general relative to
most commonly-used criteria, as well as the non-trivial
algorithmic and run-time complexity of all previously-proposed
quartet-based heuristics, such a simple heuristic is potentially
of great use.

\subsection{Materials and Scoring}
The data samples we used, here or in referred-to previous work,
 were obtained from standard data bases
accessible on the Internet, generated by ourselves,
or obtained from research groups in the field of investigation.
Contrary to biological phylogeny methods,
we do not have agreement  values on the branches:
we generate the best tree possible, globally balancing all requirements.
The quality of the results depends on
how well the hierarchical tree represents the information
in the set of weighted quartet topologies. 
The MQTC clustering heuristic generates a tree
together with a goodness score.
The latter is called standardized benefit score or $S(T)$ value
in the sequel (Definition~\ref{def.st}).
In certain natural data sets, such as H5N1 genomic sequences, consistently high
$S(T)$ values are returned even for large sets of objects of 100 or more nodes,
\cite{Ci07}.
In other nontree-structured natural data sets however, 
as treated in \cite{CVW03,CV04},
the $S(T)$ value deteriorates
more and more with increasing number of elements being put in the same tree.
The reason is that with increasing size of a nontree-structured
 natural data set
the projection of the information in the cost function into a
ternary tree may get increasingly distorted. This is because the underlying
structure in the data is incommensurate with any tree shape whatsoever.
In this way, larger structures may induce additional ``stress'' in the mapping
that is visible as lower and lower $S(T)$ scores.
Experience shows that in nontree-structured data 
the MQTC hierarchical clustering method
seems to work best for small sets of data, up to 25 items, and to deteriorate
for some (but by no means all) 
larger sets  of, say, 40 items or more.  This deterioration
is directly observable in the $S(T)$ scores 
and degrades solutions in two
common forms. The first form is tree instability when 
different or very different solutions are returned on 
successive runs. The second form is tree ``overlinearization''
when some data sets produce caterpillar-like structures 
only or predominantly.

In case a large set of objects, say 100 objects, clusters with high $S(T)$
value this is evidence that the data are of themselves tree-like, and
the quartet-topology weights, or underlying distances, truely represent to
similarity relationships between the data.
Generating trees from the same weighted quartet topologies
many times resulted in the same tree in case of high $S(T)$
value, or a similar tree in case of moderately high $S(T)$ value.
This happened for every weighting we used, even though the heuristic is randomized.
That is, there is only one way to be right, but increasingly
many ways to be increasingly wrong. 

\section{The Quartet Method}\label{sect.quartet}
Given a set $N$ of $n$ objects,
we consider every subset of four elements (objects) from our set
of $n$ elements;
there are ${n \choose 4}$ such sets. Such a set is called 
a {\em quartet}.
From each quartet $\{u,v,w,x\}$ we construct a tree of arity 3,
which implies that the tree consists of two subtrees of two
leaves each. Let us call such a tree a {\em quartet topology}.
We denote a partition $\{u,v\},\{w,x\}$ of $\{u,v,w,x\}$
by $uv | wx$.
There are
three possibilities to partition $\{u,v,w,x\}$ into
two subsets of two elements each: (i) $uv | wx$, (ii) $uw | vx$,
and (iii)  $ux | vw$.  In terms of the tree topologies:
 a vertical bar divides the two pairs of leaf nodes
into two disjoint subtrees (Figure~\ref{figquart}).

\begin{figure}[htb]
\begin{center}
\epsfig{file=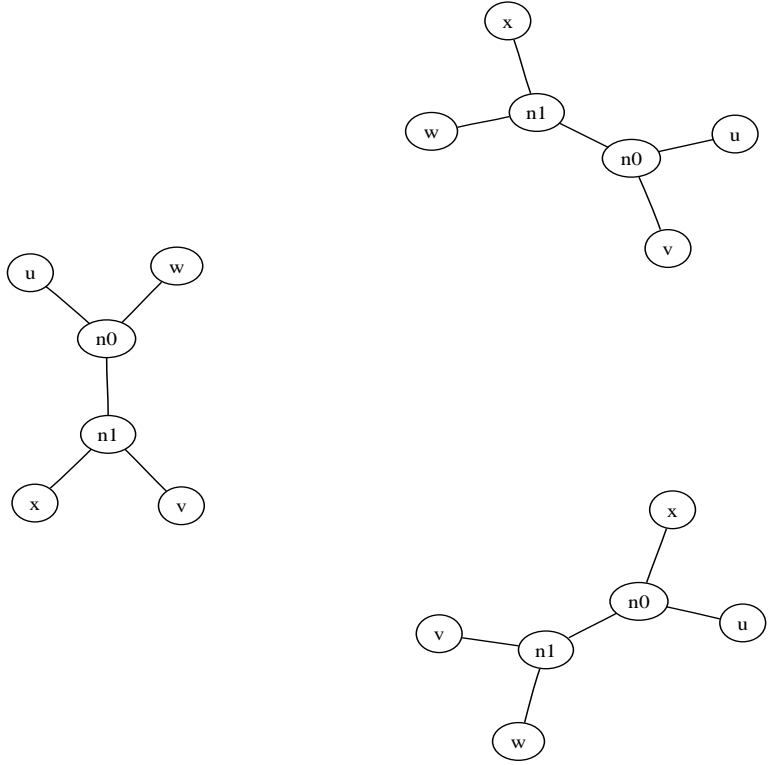,width=4in,height=4in}
\end{center}
\caption{The three possible quartet topologies for the set of leaf labels {\em u,v,w,x} }\label{figquart}
\end{figure}

The set of $3 {n \choose 4}$ quartet topologies induced by $N$
is denoted by $Q$.
Consider the class ${\cal T}$ of undirected trees
of arity 3 with $n \geq 4$ leaves (external nodes of degree 1),
labeled with the elements of $N$.
Such trees have $n$ leaves and $n-2$ internal nodes (of degree 3).
\begin{definition}\label{def.consistent}
\rm
For tree $T \in {\cal T}$ and 
four leaf labels $u,v,w,x \in N$, we say $T$ is {\em consistent}
with $uv | wx$, or the quartet topology $uv | wx$ is {\em embedded} in $T$,
if and only if the path from $u$ to $v$ does not cross
the path from $w$ to $x$. 
\end{definition}
It is easy to see that
precisely one of the three possible
quartet topologies of a quartet of four leaves is consistent
for a given tree from ${\cal T}$. Therefore a tree from ${\cal T}$
contains precisely ${n \choose 4}$ different quartet topologies.
\begin{figure}[htb]
\begin{center}
\epsfig{file=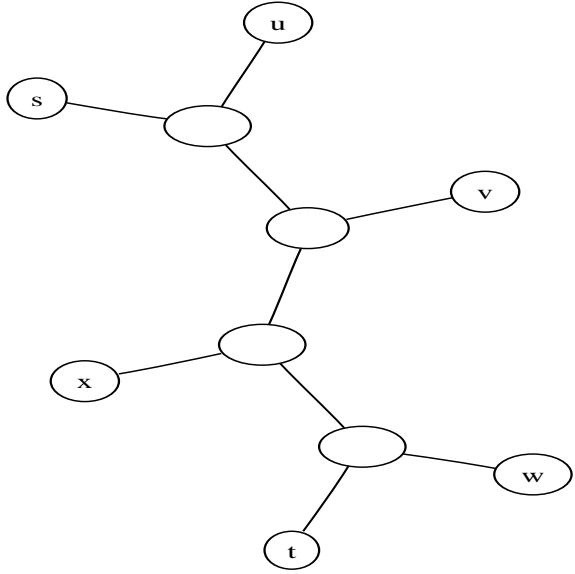,width=3in,height=3in}
\end{center}
\caption{An example tree consistent with quartet topology $uv | wx$ }\label{figexquart}
\end{figure}
Commonly the goal in the quartet method
is to find (or approximate as closely as possible) the tree
that embeds the maximal number of consistent (possibly weighted) quartet
topologies from a given set $P \subseteq Q$ of quartet topologies
\cite{Ji01} (Figure~\ref{figexquart}).
A  {\em weight function} $W: P \rightarrow {\cal R}$, with ${\cal R}$
the set of real numbers determines the weights. The unweighted case is
when $W(uv|wx)=1$ for all $uv|wx \in P$.
\begin{definition}\label{def.mqc}
\rm
The (weighted) {\em Maximum Quartet Consistency (MQC)
optimization} problem is defined as follows:

GIVEN: $N$, $P$, and $W$.

QUESTION: Find $T_0 = \max_{T \in {\cal T}} \{  \sum \{ W(uv|wx): uv|wx \in P$
and $uv|wx$ is consistent
with $T \}$.
\end{definition}

\section{Minimum Quartet Tree Cost}\label{sect.mqtc}
The rationale for the MQC optimization problem 
reflects the
genesis of the method in biological phylogeny. Under the
assumption that biological species developed by evolution in time, and $N$ is
a subset of the now existing species, there is a phylogeny 
$T_P \in
{\cal T}$ that represents that evolution. 
The set of quartet topologies
consistent with this tree has one quartet topology per quartet which is
the true one. The quartet topologies in $T_P$ are the ones which we assume
to be among the true quartet topologies, and weights are used to express
our relative certainty about this assumption concerning the individual
quartet topologies in $T_P$.

However, the data may be corrupted so that this assumption is no longer true.
In the general case of hierarchical clustering we do not even have a priori
knowledge that certain quartet topologies are objectively
true and must be embedded.
Rather, we are in the position that we can somehow assign a relative importance
to the different quartet topologies. Our task is then to balance the
importance of embedding different quartet topologies against one another,
leading to a tree that represents the concerns as well as possible.
We start
from a cost-assignment to the quartet topologies:
Given a set $N$ of $n$ objects, let $Q$ be the set of
quartet topologies, and let $C:Q \rightarrow {\cal R}$ be a {\em cost function}
assigning a real valued cost $C(uv|wx)$ to each quartet
$uv|wx \in Q$.
\begin{definition}\label{def.costs}
The {\em cost} $C_T$ of a tree $T$ with a set $N$ of leaves
is defined by
$C_T =\sum_{\{u,v,w,x\} \subseteq N} \{C(uv|wx): T$ is consistent
with  $uv |wx\}$---the
sum of the costs of all its consistent quartet topologies.
\end{definition}

\begin{definition}\label{def.mqtc}
\rm
Given $N$ and $C$,
the {\em Minimum Quartet Tree Cost (MQTC)} is
$\min_{T \in {\cal T}} \{C_T:$ $T$ is a tree with
the set $N$ labeling its leaves$\}$.
\end{definition}

We normalize the problem of finding the MQTC as follows:
Consider the list of all possible quartet topologies
for all four-tuples of labels
under consideration.  For each group of
three possible quartet topologies for a given
set of four labels $u,v,w,x$, calculate a best (minimal) cost
$m(u,v,w,x) = \min \{ C(uv|wx), C(uw|vx), C(ux|vw) \}$,
and a worst (maximal)
cost $M(u,v,w,x) = \max \{ C(uv|wx), C(uw|vx), C(ux|vw) \}$.
Summing all best quartet topologies yields the best (minimal) cost
$m = \sum_{\{u,v,w,x\} \subseteq N} m(u,v,w,x)$.
Conversely, summing all worst quartet topologies yields the worst (maximal) cost
$M =  \sum_{\{u,v,w,x\} \subseteq N} M(u,v,w,x)$.
For some cost functions,
these minimal and maximal values can not be attained by actual trees;
however, the score $C_T$ of every tree $T$ will lie between these two values.
In order to be able to compare the scores of quartet trees for different
numbers of objects in a uniform way,
we now rescale the score linearly such that the worst score maps to 0,
and the best score maps to 1:

\begin{definition}\label{def.st}
\rm
The {\em normalized tree benefit score} $S(T)$ is defined
by $S(T) = (M-C_T)/(M-m)$.
\end{definition}

Our goal is to find a full tree with a maximum value
of $S(T)$, which is to say, the lowest total cost.
Now we can rephrase the MQTC problem in such a way that
solutions of instances of different sizes can be
uniformly compared in terms of relative quality:

\begin{definition}
\rm
Definition of the {\em MQTC optimization problem}:

GIVEN: $N$ and $C$.

QUESTION: Find a tree $T_0$ with $S(T_0)=\max \{S(T):$ $T$ is a tree with
the set $N$ labeling its leaves$\}$.
\end{definition}

\begin{definition}
\rm
Definition of the {\em MQTC decision problem}:

GIVEN: $N$ and $C$ and a rational number $0 \leq k \leq 1$.

QUESTION: Is there a binary tree $T$ with
the set $N$ labeling its leaves and $S(T) \geq k$.
\end{definition}

\subsection{Computational Hardness}
\label{sect.hard}
The hardness of Quartet Puzzling is informally mentioned in the literature
\cite{WDGG05,LTM05,PBE04}, but we provide explicit proofs.
To express the notion of computational difficulty one uses
the notion of ``nondeterministic polynomial time (NP)''.
If a problem concerning $n$ objects is NP--hard
this means that the best known algorithm
for this (and a wide class of significant problems) requires
computation time at least exponential in $n$. That is, it is infeasible
in practice.
Let $N$ be a set of $n$ objects, let $T$ be a tree of which the $n$
leaves are labeled by the objects, and let $Q$ be the set of quartet topologies
and $Q_T$ be the set of quartet topologies
embedded in $T$.
\begin{definition}
\rm
The {\em MQC decision problem} is the following:

GIVEN: A set of quartet topologies $P \subseteq Q$,
and an integer $k$.

DECIDE: Is there
a binary tree $T$ such that $P \bigcap Q_T > k$.
\end{definition}
In \cite{St92} it is shown that the MQC decision problem
is NP--hard. Sometimes this problem is called the
 {\em incomplete} MQC decision
problem.  The less general {\em complete MQC decision problem} requires
$P$ to contain precisely one quartet topology per quartet (that is,
per each subset of $4$ elements out of the $n$ elements),
and is proved to be NP--hard as well in \cite{BJKLW99}.

\begin{theorem}\label{theo.NPhard}
(i) The MQTC decision problem is NP--hard.

(ii) The MQTC optimization problem is NP--hard.
\end{theorem}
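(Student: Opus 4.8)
The plan is to prove both parts by a single polynomial-time reduction from the \emph{complete} MQC decision problem, which is NP--hard by \cite{BJKLW99}. Part (ii) will then follow at once, since any algorithm that solves the optimization problem also decides the decision problem. Throughout I use the fact, noted after Definition~\ref{def.consistent}, that every tree $T \in {\cal T}$ is consistent with exactly one of the three topologies of each quartet, so that $T$ embeds precisely ${n \choose 4}$ quartet topologies.

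First I would take an arbitrary instance of the complete MQC decision problem: a set $N$ of $n$ objects, a set $P \subseteq Q$ containing exactly one quartet topology for each of the ${n \choose 4}$ quartets, and an integer $k$. From this I construct a cost function $C: Q \rightarrow {\cal R}$ by setting $C(uv|wx) = 0$ whenever $uv|wx \in P$ and $C(uv|wx) = 1$ whenever $uv|wx \notin P$. This construction is clearly computable in time polynomial in the size of the MQC instance, and it produces a legitimate MQTC instance.

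Next I would compute the normalization constants of Definition~\ref{def.st} for this cost function. Because $P$ contains exactly one topology per quartet, each triple of topologies for a fixed four-tuple $\{u,v,w,x\}$ contains one topology of cost $0$ and two of cost $1$, so $m(u,v,w,x)=0$ and $M(u,v,w,x)=1$ for every quartet, whence $m=0$ and $M={n \choose 4}$. Since $T$ is consistent with exactly one topology per quartet, its cost $C_T$ counts precisely those quartets whose embedded topology lies outside $P$, i.e. $C_T = {n \choose 4} - |P \cap Q_T|$. Substituting into Definition~\ref{def.st} gives the key identity
\[
S(T) \;=\; \frac{M - C_T}{M - m} \;=\; \frac{|P \cap Q_T|}{{n \choose 4}}.
\]
Setting the rational threshold $k' = (k+1)/{n \choose 4}$ (a no-instance is output directly if $k \geq {n \choose 4}$), a tree satisfies $S(T) \geq k'$ if and only if $|P \cap Q_T| \geq k+1$, that is, $|P \cap Q_T| > k$. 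Hence the MQC instance $(N,P,k)$ is a yes-instance exactly when the constructed MQTC decision instance $(N,C,k')$ is, proving part (i). For part (ii), an algorithm solving the MQTC optimization problem returns a tree $T_0$ maximizing $S(T_0)$; comparing $S(T_0)$ against the threshold $k'$ decides the MQTC decision problem, so the optimization problem is NP--hard as well.

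The reduction itself is short, so the only delicate point is the normalization, and choosing the right source problem is what makes it go through cleanly. The closed form for $S(T)$ depends on every quartet contributing a nondegenerate range $M(u,v,w,x)-m(u,v,w,x)=1$ to the denominator $M-m$; this is exactly what the completeness assumption on $P$ guarantees. Using the incomplete MQC problem instead would leave quartets all of whose three topologies have equal cost, contributing $0$ to $M-m$ and forcing one to carry the count of ``determined'' quartets through the argument. I would therefore anticipate the main care being needed in verifying the $m$, $M$, and $C_T$ bookkeeping above and in matching the strict MQC inequality to the non-strict MQTC threshold via the integer shift $k \mapsto k+1$.
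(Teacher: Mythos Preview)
Your proof is correct and follows essentially the same reduction as the paper: assign cost $0$ to topologies in $P$ and cost $1$ otherwise, so that $C_T = \binom{n}{4} - |P \cap Q_T|$, and then translate the MQC threshold into an $S(T)$ threshold. The one notable difference is that you reduce from the \emph{complete} MQC problem (NP--hard by \cite{BJKLW99}) rather than the general one used in the paper; this buys you the explicit values $m=0$, $M=\binom{n}{4}$ and the clean identity $S(T)=|P\cap Q_T|/\binom{n}{4}$, and lets you handle the strict-versus-nonstrict threshold via the integer shift $k\mapsto k+1$, whereas the paper leaves $M$ and $m$ abstract and is less explicit about that conversion.
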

\begin{proof}
(i) By reduction from the MQC decision problem.
   For every MQC decision problem one can define a corresponding
MQTC decision problem that has the same solution: give the
quartet topologies in $P$ cost 0 and the ones in $Q - P$ cost 1.
Consider the MQTC decision problem: is there a tree $T$ with the
set $N$ labeling its leaves such that $C_T < {n \choose 4} -k$ ?
An alternative equivalent formulation is: is there a tree $T$ with the
set $N$ labeling its leaves such that
\[
S(T) > \frac{M- {n \choose 4}+k}{M-m} \; \; ?
\]
Note that every tree $T$ with the
set $N$ labeling its leaves has precisely one out of the three
quartet topologies of every of the ${n \choose 4}$ quartets
embedded in it. Therefore, the cost $C_T = {n \choose 4}-|P \bigcap Q_T |$.
If the answer to the above question is affirmative,
then the number of quartet topologies in $P$ that
are embedded in the tree exceeds $k$; if it is not then there
is no tree such that the number of quartet topologies in $P$
embedded in it exceeds $k$.
This way the MQC decision problem can be
reduced to the MQTC decision problem, which shows also the latter
to be NP--hard.

(ii) An algorithm for the MQTC optimization problem yields
an algorithm for the MQTC decision problem with the same running
time up to a polynomial additive term: If the answer to
the MQTC optimization problem is a tree $T_0$, then we determine $S(T_0)$
in $O(n^4)$ time. Let $k$ be the bound of the MQTC decision
problem. If $S(T_0) \geq k$ then the answer to the decision problem
is ``yes,'' otherwise ``no.'' 
\end{proof}

The proof shows that negative complexity results for MQC carry
over to MQTC. 
\begin{definition}\label{def.ptas}
\rm
A {\em polynomial time approximation scheme (PTAS)} is
a polynomial time approximation algorithm for an
optimization problem with a performance guaranty.
It takes an instance of an optimization problem and a parameter $\epsilon>0$,
and produces a solution of an optimization problem that is optimal up
to an $\epsilon$ fraction.
\end{definition}
For example, for the MQC  optimization problem as defined above,
a PTAS would produce a tree embedding at least $(1-\epsilon)|P|$ quartets
from $P$.
The running time of a PTAS is required to be polynomial in the size of the
problem concerned for every fixed $\epsilon$, but can be different for
different $\epsilon$.
In \cite{BJKLW99} a
PTAS for a restricted version
of the MQC optimization problem, 
namely the ``complete'' MQC optimization problem
defined above, is exhibited.  This is
a theoretical approximation that
would run in something like $n^{19}$. For general (what we 
have called ``incomplete'')
MQC optimization it is
shown that even such a theoretical algorithm does not exist, unless
P=NP.
\begin{theorem}\label{theo.ptas}
If a PTAS for the MQTC optimization problem
exists, then P=NP.
\end{theorem}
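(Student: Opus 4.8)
The plan is to prove the contrapositive-flavored implication by an approximation-preserving reduction: I would show that a PTAS for the MQTC optimization problem (Definition~\ref{def.ptas}) yields a PTAS for the incomplete MQC optimization problem, and then invoke the inapproximability fact already quoted just before the theorem, namely that incomplete MQC admits no PTAS unless P$=$NP \cite{BJKLW99}. The reduction itself is the same cost assignment used in the proof of Theorem~\ref{theo.NPhard}(i); the whole content of the present theorem is that this reduction not only maps yes-instances to yes-instances but actually preserves approximation ratios, so that a near-optimal MQTC tree translates back into a near-optimal MQC tree.

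Concretely, I would start from an incomplete MQC instance $(N,P,W)$ with $W\equiv 1$ and with $P$ containing at most one topology per quartet (the hard instances produced by the NP-hardness and inapproximability constructions are of this form). I build the MQTC instance by giving cost $0$ to every topology in $P$ and cost $1$ to every topology in $Q-P$, so that, writing $q(T)=|P\cap Q_T|$, every tree $T\in{\cal T}$ satisfies $C_T={n\choose 4}-q(T)$ exactly as in Theorem~\ref{theo.NPhard}(i). The key step is to evaluate the normalizing constants of Definition~\ref{def.st}: since $P$ has at most one topology per quartet, each quartet carries at least two cost-$1$ topologies, so $M(u,v,w,x)=1$ for every quartet and hence $M={n\choose 4}$; similarly $m(u,v,w,x)=0$ precisely for the quartets that possess a topology in $P$, giving $m={n\choose 4}-|P|$ and $M-m=|P|$. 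Substituting yields the clean identity $S(T)=q(T)/|P|$.

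Granting this identity, the reduction is approximation-preserving in the strongest sense: on corresponding instances the objective $S(T)$ is a fixed (instance-dependent but $T$-independent) positive multiple of the MQC objective $q(T)$ over the same feasible set ${\cal T}$. Hence the $S$-optimal tree is exactly the MQC-optimal tree, and any tree with $S(T)\ge(1-\epsilon)S^{*}$ obeys $q(T)\ge(1-\epsilon)\max_{T'}q(T')$. So I would run a hypothetical MQTC PTAS with parameter $\epsilon$ on the constructed instance and return the same tree; this produces a $(1-\epsilon)$-approximate MQC solution in time polynomial for each fixed $\epsilon$, i.e.\ a PTAS for incomplete MQC, forcing P$=$NP.

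I expect the only genuine obstacle to be the bookkeeping around the normalization and the admissible instance class. The clean identity $S(T)=q(T)/|P|$ hinges on $M={n\choose 4}$, equivalently on no quartet having all three of its topologies in $P$; this is exactly why I restrict to at-most-one-topology-per-quartet instances and must note that the cited inapproximability of incomplete MQC already holds for this subclass. If one insisted on fully general $P$, the degenerate quartets with two or three topologies in $P$ would contribute a fixed additive term to $C_T$ and shift $M$, so $S(T)$ would become an \emph{affine} rather than linear function of $q(T)$ and the affine offset could degrade the ratio; such quartets would then have to be preprocessed away before the normalization. With the restricted-instance hardness in hand, the chain ``PTAS for MQTC $\Rightarrow$ PTAS for incomplete MQC $\Rightarrow$ P$=$NP'' completes the argument.
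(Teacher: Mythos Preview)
Your proposal is correct and follows essentially the same approach as the paper: the same cost-$0/1$ reduction from Theorem~\ref{theo.NPhard}, the observation that this reduction is approximation-preserving, and the appeal to the inapproximability of incomplete MQC from \cite{BJKLW99}. Your treatment is in fact more careful than the paper's: the paper asserts that the reduction is ``poly-time approximation preserving'' based on the affine identity $C_T=\binom{n}{4}-|P\cap Q_T|$, whereas you explicitly compute the normalization constants $M=\binom{n}{4}$ and $M-m=|P|$ to obtain the clean multiplicative identity $S(T)=q(T)/|P|$, which is what actually makes the ratio transfer work for the $S(T)$-maximization formulation of MQTC; you also explicitly flag the at-most-one-topology-per-quartet restriction needed for this normalization and note that the cited hardness already applies to such instances.
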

\begin{proof}
The reduction in the proof of Theorem~\ref{theo.NPhard}
yields a restricted version of the MQTC optimization problem that is
equivalent to the MQC optimization problem. There is an isomorphism between
every partial solution, including the optimal solutions involved:
For every tree $T$ with $N$ labeling the leaves,
the MQTC cost $C_T = {n \choose 4}-|P \bigcap Q_T |$ where
$P \bigcap Q_T$ is the set of MQC consistent quartets.
The reduction is also
poly-time approximation preserving, since the
reduction gives a linear time computable
isomorphic version of the MQTC problem instance
for each MQC problem instance. 
Since \cite{BJKLW99} has shown that a
PTAS for the MQC optimization problem does not exist unless P=NP,
it also holds for this restricted version of the
MQTC optimization problem that a PTAS does not exist unless P=NP,
The full MQTC optimization problem is at least as hard to approximate
by a PTAS, from which the theorem follows.
\end{proof}

Is it possible that the best $S(T)$ value is always one, that is,
there always exists a tree that embeds all quartets at minimum
cost quartet topologies?
Consider the case $n=|N|=4$. Since there is only one quartet,
we can set $T_0$ equal to the minimum cost quartet topology,
and have $S(T_0)=1$.
A priori we cannot exclude the possibility that
for every $N$ and $C$ there always is a tree $T_0$ with $S(T_0)=1$.
In that case, the MQTC optimization problem reduces to finding that $T_0$.
However, the situation turns out to be more complex. Note first
that the set of quartet topologies uniquely determines
a tree in ${\cal T}$, \cite{Bu71}.

\begin{lemma}\label{lem.unique}
Let $T,T'$ be different labeled trees in ${\cal T}$ and let $Q_T,Q_{T'}$ be
the sets of embedded quartet topologies, respectively. Then,
$Q_T \neq Q_{T'}$.
\end{lemma}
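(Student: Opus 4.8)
The plan is to prove the contrapositive: the map $T \mapsto Q_T$ is injective, i.e. a tree in $\mathcal{T}$ can be reconstructed from its set of embedded quartet topologies. I would argue by induction on the number of leaves $n$. For the base case $n=4$ the class $\mathcal{T}$ consists of exactly the three quartet trees, and by the observation following Definition~\ref{def.consistent} each embeds precisely one (distinct) quartet topology, so distinct trees have distinct quartet sets.

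For the inductive step ($n \ge 5$), suppose $Q_T = Q_{T'}$ for $T, T' \in \mathcal{T}$ on leaf set $N$. Fix a leaf $x \in N$ and form $T - x$ and $T' - x$ by deleting $x$ together with its pendant edge and suppressing the resulting degree-$2$ vertex; these lie in $\mathcal{T}$ on $N \setminus \{x\}$ (with $n - 1 \ge 4$ leaves). First I would check that deletion preserves the topology on the surviving leaves: for any quartet $\{a,b,c,d\}$ avoiding $x$, suppressing a degree-$2$ vertex does not change whether the path from $a$ to $b$ crosses the path from $c$ to $d$, so $Q_{T-x}$ is exactly the set of topologies in $Q_T$ on $4$-subsets not containing $x$, and likewise for $T'$. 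Since $Q_T = Q_{T'}$, restricting to quartets avoiding $x$ gives $Q_{T-x} = Q_{T'-x}$, and the induction hypothesis yields a common tree $S := T - x = T' - x$.

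It then remains to show that $x$ attaches to the same edge of $S$ in both $T$ and $T'$. Here I would use that the quartets through $x$ locate its attachment edge: for a triple $\{a,b,c\}$ of leaves of $S$, the paths from $a,b,c$ meet at a unique median vertex $m$ whose three incident directions partition the edges of $S$, and the embedded topology of $\{x,a,b,c\}$ (say $xa \mid bc$) records exactly which of these directions contains the attachment edge of $x$. Ranging over all triples pins down the attachment edge uniquely, so if $x$ attached at distinct edges $e$ in $T$ and $e'$ in $T'$, some triple would place $e$ and $e'$ in different directions and hence assign $\{x,a,b,c\}$ different topologies in $T$ and $T'$, contradicting $Q_T = Q_{T'}$. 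Thus $T = T'$.

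The main obstacle is this last step: rigorously showing that two distinct candidate attachment edges are always separated by some triple, i.e. that the attachment edge is genuinely recoverable from the quartets through $x$. The crux is that every internal node of $S$ has degree $3$ and $S$ has at least four leaves, so each of the three branch directions at any internal vertex contains a leaf; this guarantees, for any pair of distinct edges $e \ne e'$, a triple $\{a,b,c\}$ whose median places $e$ and $e'$ in different directions. I would isolate this separation property as a small standalone claim and verify it by tracing the path between $e$ and $e'$ in $S$ and choosing one leaf from each relevant branch.
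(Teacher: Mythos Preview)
The paper does not actually supply a proof of this lemma: it is stated with a citation to Buneman~\cite{Bu71} and treated as a known fact. So there is no ``paper's own proof'' to compare against.

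Your inductive argument is correct and is essentially the classical route to this result. The base case is immediate, and the reduction step is sound: restricting $Q_T$ to quartets avoiding a fixed leaf $x$ gives exactly $Q_{T-x}$, because suppressing the degree-$2$ vertex created by deleting $x$ does not alter any path among the remaining leaves. The part you flag as the main obstacle---that two distinct candidate attachment edges $e \neq e'$ in $S$ are always distinguished by some triple---goes through just as you outline. Concretely: pick an internal vertex $m$ that separates $e$ from $e'$ (if $e$ and $e'$ share a vertex take that vertex; otherwise take the endpoint of $e$ nearest $e'$), and then choose one leaf from each of the three components of $S - m$, which is possible since $S$ is ternary with at least four leaves. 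That triple has median $m$, and the induced quartet with $x$ reads off which component contains the attachment edge, giving different answers for $e$ and $e'$.

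So your proposal is a complete and valid proof where the paper simply invokes the literature; nothing needs to be changed beyond writing out the separation claim explicitly as you already planned.
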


A {\em complete set} of quartet topologies on $N$ is a set containing
precisely one quartet topology per quartet.
There are $3^{n \choose 4}$ such combinations, but only $2^{n \choose 2}$
labeled undirected graphs on $n$ nodes
(and therefore $|{\cal T}| \leq 2^{n \choose 2}$).
Hence, not every complete set of quartet topologies
corresponds to a tree in ${\cal T}$. This already suggests that we can
weight the quartet topologies in such a way that the full combination
of all quartet topologies at minimal costs does not correspond to
a tree in ${\cal T}$, and hence $S(T_0) < 1$ for $T_0 \in {\cal T}$
realizing the MQTC optimum. For an explicit example of this, we use
that a complete set corresponding to a tree in ${\cal T}$
 must satisfy certain transitivity properties,
 \cite{CS77,CS81}:

\begin{lemma}\label{cl1}
Let $T$ be a tree in the considered class with leaves $N$, $Q$ the set
of quartet topologies and
$Q_0 \subseteq Q$. Then
$Q_0$ uniquely determines $T$ if

(i) $Q_0$ contains precisely one quartet topology for every quartet,
and

(ii) For all $\{a,b,c,d,e\} \subseteq N$,
if $ab|bc, ab|de \in Q$ then $ab|ce \in Q$, as well as
if $ab|cd, bc|de \in Q$ then $ab|de \in Q$.
\end{lemma}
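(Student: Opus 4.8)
The plan is to separate the claim into its two halves, uniqueness and existence, and to dispatch them by quite different means. Uniqueness is immediate from what is already available: if two trees $T, T' \in {\cal T}$ both satisfied $Q_T = Q_{T'} = Q_0$, then Lemma~\ref{lem.unique} would force $T = T'$, so at most one tree can realize $Q_0$, irrespective of conditions (i) and (ii). The entire mathematical content of the lemma therefore lies in \emph{existence}: given $Q_0$ satisfying (i) and (ii), I must exhibit a tree $T \in {\cal T}$ whose set of embedded quartet topologies is exactly $Q_0$. The transitivity hypotheses (ii) will be used only here.

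For existence I would argue by induction on $n = |N|$. The base case $n = 4$ is trivial: by (i) the set $Q_0$ consists of a single quartet topology, and the unique tree consisting of the two leaf-pairs of that topology realizes it. For the inductive step, fix a leaf $z \in N$, set $N' = N \setminus \{z\}$, and let $Q_0'$ be the restriction of $Q_0$ to the quartets drawn from $N'$. The first thing to check is that $Q_0'$ inherits both hypotheses: it plainly contains exactly one topology per quartet of $N'$, and each instance of (ii) over a five-element subset of $N'$ involves only quartets inside that subset, hence only quartets of $N'$, so it is literally an instance of (ii) for $Q_0$ and holds. By the inductive hypothesis there is a (unique) tree $T' \in {\cal T}$ on the leaf set $N'$ with $Q_{T'} = Q_0'$.

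It remains to reinsert $z$. Since every tree in ${\cal T}$ is trivalent with labeled leaves, attaching $z$ amounts to subdividing a single edge of $T'$ by a new internal node to which $z$ is joined, and distinct edges yield distinct topologies for the quartets involving $z$. The heart of the argument is to show that the topologies of $Q_0$ containing $z$ single out one such edge \emph{consistently}. Concretely, for each internal node $v$ of $T'$ the quartets $\{z,b,c,d\}$ assign $z$ to one of the three subtrees meeting at $v$, and I would use the substitution instances in (ii) to prove that these local assignments agree across $T'$ and thus pin down a unique edge $e$ at which to graft $z$. Finally I would verify that grafting $z$ at $e$ reproduces every $z$-containing topology prescribed by $Q_0$; the quartets avoiding $z$ are already correct because they are governed by $T'$. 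Together with the uniqueness established above, this yields $Q_T = Q_0$ for a unique $T$ and completes the proof.

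I expect the insertion step to be the main obstacle. Showing that conditions (i) and (ii) force a single, contradiction-free attachment point is exactly where the transitivity hypotheses earn their keep: without (ii) the $z$-quartets could point to mutually incompatible edges, and the delicate part is the case analysis relating the topology of $\{z,b,c,d\}$ to the position of $z$ relative to the median of $b,c,d$ in $T'$. By comparison the remaining bookkeeping—heredity of the hypotheses to $N'$ and the final check that the grafted tree realizes $Q_0$—is routine.
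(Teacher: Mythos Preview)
The paper does not prove this lemma at all: it is stated as a citation of the classical Colonius--Schulze characterization \cite{CS77,CS81}, with no argument given. Your proposal is therefore not competing with a proof in the paper but supplying one where the paper defers to the literature.

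Your plan is the standard one and is correct in outline. Uniqueness via Lemma~\ref{lem.unique} is immediate, and the inductive leaf-insertion argument for existence is exactly how the Colonius--Schulze result is usually established. You have correctly identified the only nontrivial step: showing that the $z$-quartets in $Q_0$ pick out a single edge of $T'$ at which to graft $z$. The transitivity conditions (ii) are precisely what force the local ``which branch does $z$ belong to at this internal node?'' choices to be globally coherent, so that the intersection of the indicated half-trees over all internal nodes is a single edge. The verification that the grafted tree realizes all $z$-containing topologies of $Q_0$ is then a short case check using (ii) once more.

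One caution about the statement as printed: condition (ii) contains evident typos (the pattern $ab|bc$ repeats the label $b$, and the membership should be in $Q_0$ rather than $Q$). For your write-up you should work with the intended transitivity rules, namely that $ab|cd,\ ab|de \in Q_0$ imply $ab|ce \in Q_0$, and $ab|cd,\ bc|de \in Q_0$ imply $ab|de \in Q_0$; these are the forms actually invoked in the proof of the subsequent theorem and are the ones appearing in the cited sources.
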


\begin{theorem}
There are $N$ (with $n=|N|=5$) and a cost function $C$ such that,
for every $T \in {\cal T}$,
$S(T)$ does not exceed $4/5$.
\end{theorem}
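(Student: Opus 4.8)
The plan is to reduce the statement to the existence of a single \emph{complete set} of quartet topologies (one topology per quartet) that is embedded in no tree of ${\cal T}$, and then to convert such a set into a cost function by a plain $0/1$ assignment. For $n=5$ there are ${5 \choose 4}=5$ quartets and $3^{5 \choose 4}=243$ complete sets, whereas by Lemma~\ref{lem.unique} distinct trees induce distinct embedded sets, so at most $|{\cal T}|=15$ of these complete sets are realizable. I would therefore fix one non-realizable complete set $Q_0$ and define $C(\tau)=0$ for every $\tau\in Q_0$ and $C(\tau)=1$ for the other two topologies of each quartet. Then each quartet contributes minimal cost $0$ and maximal cost $1$, so $m=0$ and $M=5$, and for every tree $T$ the cost equals the number of quartets on which $T$ disagrees with $Q_0$, i.e. $C_T=5-|Q_T\cap Q_0|$. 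Hence
\[
S(T)=\frac{M-C_T}{M-m}=\frac{|Q_T\cap Q_0|}{5}.
\]
Because $Q_T$ is itself a complete set and $Q_0$ is realized by no tree, $Q_T\neq Q_0$, so $T$ and $Q_0$ differ on at least one quartet and $|Q_T\cap Q_0|\le 4$. This yields $S(T)\le 4/5$ for every $T\in{\cal T}$, which is exactly the claim.

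The only remaining task is to exhibit and certify a concrete non-realizable $Q_0$. Here I would invoke the structural content of Lemma~\ref{cl1}, which for five leaves says that a complete set is realizable precisely when it admits two disjoint ``cherries'': two disjoint pairs $\{q,r\},\{s,t\}$ such that $q,r$ are paired in all three quartets containing both, and likewise $s,t$ (these two cherries then pin down the middle leaf and force the whole caterpillar). It therefore suffices to display a complete set with no such pair of disjoint cherries. A clean choice on $N=\{1,2,3,4,5\}$ is the cyclic set that, for the quartet obtained by deleting $i$, pairs $i{+}1$ with $i{+}2$ against $i{+}3$ with $i{+}4$ (indices mod $5$), namely $Q_0=\{\,23|45,\;34|15,\;12|45,\;15|23,\;12|34\,\}$ for the quartets omitting $1,2,3,4,5$ respectively. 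One checks over the ten pairs that no pair $\{x,y\}$ is paired in all three quartets containing it, so $Q_0$ has no cherry at all and is non-realizable; plugging it into the reduction gives $S(T)\le 4/5$ as required.

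The main obstacle is thus purely the verification step: confirming, quartet by quartet, that the chosen $Q_0$ violates the cherry/transitivity condition of Lemma~\ref{cl1} for \emph{every} candidate pair. This is a finite check over five quartets and ten pairs and carries no real difficulty; the only subtlety is arithmetic framing. Any non-realizable complete set already delivers the bound $S(T)\le 4/5$, and the symmetric cyclic example above in fact forces $|Q_T\cap Q_0|\le 3$ and hence the stronger $S(T)\le 3/5$. If one instead wants the value $4/5$ to be attained exactly, I would take $Q_0$ at Hamming distance one from a realizable set, i.e. start from the embedded set of some tree and change a single quartet topology so as to destroy both of its cherries; then that tree still matches $4$ of the $5$ quartets, so $\max_T S(T)=4/5$ precisely. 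The same $0/1$ device shows more generally that for every $n\ge 5$ there are cost functions with $\max_T S(T)<1$.
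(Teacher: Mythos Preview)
Your argument is correct. The reduction to a non-realizable complete set with a pure $0/1$ cost function is sound: with $m=0$, $M=5$, and $C_T=5-|Q_T\cap Q_0|$, the inequality $|Q_T\cap Q_0|\le 4$ follows immediately from Lemma~\ref{lem.unique} once $Q_0$ is not the embedded set of any tree, and your cyclic $Q_0$ indeed has no cherry pair (so it cannot equal $Q_T$ for any caterpillar on five leaves). Your remark that the cyclic example actually yields $S(T)\le 3/5$, while a $Q_0$ at Hamming distance one from a realizable set makes $4/5$ attained, is also correct.

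The route differs from the paper's. The paper does not use a $0/1$ assignment on a complete set; instead it fixes the four $y$-quartets to the topologies of the tree $T_0=(y,((u,v),(w,x)))$ at cost $0$, but on the fifth quartet $\{u,v,w,x\}$ it gives the two ``wrong'' topologies cost $0$ and the ``right'' one cost $1-\epsilon$, and then argues via the transitivity conditions of Lemma~\ref{cl1} that any tree embedding a zero-cost topology on $\{u,v,w,x\}$ must pay at least $1$ on some $y$-quartet. So the paper's case analysis is at the level of individual trees and Lemma~\ref{cl1}, whereas your argument is a one-line counting reduction (non-realizable complete set $\Rightarrow$ $|Q_T\cap Q_0|\le 4$) followed by a finite cherry check. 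Your approach is more transparent, avoids the $\epsilon$ parameter entirely, gives exact denominators $M=5$, $m=0$, and generalizes verbatim to all $n\ge 5$. The paper's construction, by contrast, makes the optimal tree explicit from the outset and illustrates how Lemma~\ref{cl1} forces incompatibilities, at the price of a more delicate normalization (indeed $M=5-\epsilon$ rather than $5$). One small point: you do not really need to invoke Lemma~\ref{cl1} for the cherry criterion on five leaves; since every tree in ${\cal T}$ with $n=5$ is a caterpillar, the fact that each of its two cherries is paired in all three quartets containing it is immediate, and that alone certifies your $Q_0$ as non-realizable.
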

\begin{proof}
Consider $N=\{u,v,w,x,y\}$ and $C(uv|wx) = 1-\epsilon (\epsilon > 0),
C(uw|xv)= C(ux|vw)=0$,
$C(xy|uv)=C(wy|uv)=C(uy|wx)=C(vy|wx)=0$, and $C(ab|cd)=1$ for
all remaining quartet topologies $ab|cd \in Q$.
We see that $M= 5 - \epsilon$, $m=0$.
The tree $T_0 = (y,((u,v),(w,x)))$ has cost $C_{T_0}= 1-\epsilon$,
since it embeds quartet topologies $uw|xv, xy|uv, wy|uv, uy|wx, vy|wx$.
We show that $T_0$ achieves the MQTC optimum.

{\em Case 1:}
If a tree $T \neq T_0$ embeds $uv|wx$, then it
must by Item (i) of Lemma~\ref{cl1}
also embed a quartet topology
containing $y$ that has cost 1.

{\em Case 2:}
If a tree $T \neq T_0$ embeds $uw|xv$ and $xy|uv$, then it must by
Item (ii) of the Lemma~\ref{cl1}
also embed $uw|xy$,
and hence have cost $C_T \geq 1$. Similarly, all other
remaining cases of embedding a combination of a quartet
topology not containing $y$ of 0 cost with a quartet topology containing
$y$ of 0 cost in $T$, imply an embedded
quartet topology of cost 1 in $T$.
\end{proof}

Altogether,
the MQTC optimization problem is infeasible in practice, and natural data
can have an optimal $S(T)< 1$. In fact, it follows from the above
analysis that to determine the optimal $S(T)$ in general is NP--hard.
If the deterministic
approximation of this optimum to within a given precision
can be done in polynomial time, then
that implies the generally disbelieved conjecture P=NP.
Therefore, any practical approach to obtain or approximate the
MQTC optimum requires some type of heuristics, for example
Monte Carlo methods.

\section{Monte Carlo Heuristic}\label{sect.mc}
Our algorithm is a Monte Carlo heuristic,
essentially randomized hill-climbing 
where
undirected trees evolve in a random walk
driven by a prescribed fitness function. We are given a set $N$ of
$n$ objects and a cost function $C$.

\begin{definition}
\rm
We define a {\em simple mutation} on a labeled undirected ternary tree
as one of the following possible transformations:
\begin{enumerate}
\item A {\em leaf interchange}: 
randomly choose two leaves that are
not siblings and interchange them.
\item A {\em subtree interchange}: randomly choose two 
internal nodes $u,w$, or an internal node $u$ and a leaf $w$,
such that the shortest path length between $u$ and $w$ is at
least three steps. That is, $u-x- \cdots -y-w$   
is a shortest path in the tree. 
Disconnect $u$ (and the subtree rooted at $u$
disjoint from the path) from $x$, and disconnect $w$ 
(and the subtree rooted at $w$ 
disjoint from the path if $w$ is not a leaf) 
from $y$. Attach $u$ and its subtree 
to $y$, and $w$ (and its subtree if $w$ is not a leaf) to $x$.
\item A {\em subtree transfer}, whereby a randomly chosen subtree 
(possibly a leaf) is detached and reattached in another place, 
maintaining arity invariants.
\end{enumerate}
\end{definition}
Each of these simple mutations keeps the
number of leaf nodes and internal nodes in the tree invariant;
only the structure and placements
change. Clearly, mutations 1) and 2) can be together replaced by the single 
mutation below. But in the implementation they are separated as above.
\begin{itemize}
\item A {\em subtree and/or leaf interchange}, which consists of randomly choosing two
nodes (either node or both can be leaves or internal nodes), say $u,w$,
such that the shortest path length between $u$ and $w$ is at
least three steps. That is, $u-x- \cdots -y-w$
is a shortest path in the tree. Disconnect $u$ (and the subtree rooted
at $u$
disjoint from the path) from $x$, and disconnect $w$
(and the subtree rooted at $w$
disjoint from the path) from $y$. Attach $u$ and its subtree
to $y$, and $w$ and its subtree to $x$.
\end{itemize}
A sequence of these mutations suffices to go from every ternary 
tree with $n$ labeled leaves
and $n-2$ unlabeled internal nodes to any other ternary tree 
with $n$ labeled leaves
and $n-2$ unlabeled internal nodes, Theorem~\ref{theo.mutations}
in Appendix~\ref{sect.mutations}.
\begin{definition}
\rm
A {\em $k$-mutation} is a sequence of $k$
simple mutations.
Thus, a simple mutation is a 1-mutation.
\end{definition}

\subsection{Algorithm}
The algorithm is given in Figure~\ref{fig.alg}. We comment on the different
steps:

{\em Comment on Step 2:} A tree is consistent with precisely
$\frac{1}{3}$ of all quartet topologies, one for every quartet.
A random tree is likely to be consistent with about $\frac{1}{3}$ of the best
quartet topologies---but because of dependencies this figure is
not precise.
\begin{figure}
\begin{center}
\begin{description}
\item{\bf Step 1:} First, a random tree $T \in {\cal T}$ with $2n-2$ nodes
is created, consisting of $n$ leaf nodes (with 1 connecting edge) labeled
with the names of the data items, and $n-2$ non-leaf or {\em internal} nodes.
When we need to refer to specific internal nodes, we
label them with the lowercase letter ``k'' 
 followed by a
unique integer identifier.  Each internal node has exactly
three connecting edges.

\item{\bf Step 2:} For this
tree $T$, we calculate the summed total cost of all embedded quartet topologies,
and compute  $S(T)$.

\item{\bf Step 3:}
The {\em currently best known tree} variable $T_0$ is set
to $T$:   $T_0 \leftarrow T$.

\item{\bf Step 4:}
Draw a number $k$ with
probability $p(1)=1-c$ and $p(k) = c /(k (\log k)^2)$ for $k \geq 2$,
where
$1/c = \sum_{k=2}^{\infty} 1/(k (\log k)^2)$.
By \cite{Br92} it is known that $1/c \approx 2.1$.

\item{\bf Step 5:}
Compose a $k$-mutation by,
for each of the constituent sequence of $k$ simple mutations, choosing one of
the three types listed above with equal probability.  For each of
these simple mutations, we uniformly at random select
leaves or internal nodes, as appropriate.

\item{\bf Step 6:}
In order to search for a better tree,
we simply
apply the $k$-mutation constructed in {\em Step 5}
to $T_0$ to obtain $T$, and then
calculate $S(T)$.  If $S(T) > S(T_0)$, then replace the current
candidate in $T_0$ by $T$ (as the new best tree):
$T_0 \leftarrow T$.

\item{\bf Step 7:}
If $S(T_0) =1$ or a {\em termination condition} to be discussed below holds,
then output the tree in $T_0$ as the best tree and halt.
Otherwise, go to {\em Step 4}.
\end{description}
\caption{The Algorithm}\label{fig.alg}
\end{center}
\end{figure}

{\em Comment on Step 3:} This $T_0$  is used as
the basis for further searching.

{\em Comment on Step 4:}
This number $k$ is the number of simple mutations that we will constitute
the next $k$-mutation. 
The probability mass function $p(k)$ for $k\geq 2$ is 
$p(k)=c/(k \log^2 k)$ with $c \approx 2.1$.
In practice, we used a ``shifted'' fat tail
probability mass function $1/((k+2) (\log k+2)^2)$ for $k \geq 1$.

{\em Comment on Step 5:} Notice
that trees which are close to the original tree (in terms of number of
simple mutation steps in between) are examined often, while trees that are
far away from the original tree will eventually be examined, but not very
frequently.

\begin{remark}
\rm
We have chosen $p(k)$ to be a ``fat-tail'' distribution,
with one of the fattest tails possible,
to concentrate maximal probability also on the larger values of $k$.
That way, the likelihood of getting trapped in local minima is minimized.
In contrast, if one would choose an exponential scheme, like
$q(k)=c e^{-k}$, then the larger values of $k$ would arise so scarcely
that practically speaking the distinction between being absolutely trapped in
a local optimum, and the very low escape probability, would be
insignificant. Considering positive-valued probability mass
functions $q: {\cal N} \rightarrow (0,1]$, with ${\cal N}$
the natural numbers, as we do here, we note that such a function
(i) $\lim_{k \rightarrow \infty} q(k) =0$,
 and (ii) $\sum_{k=1}^{\infty} q(k) =1$.
Thus, every function of the natural numbers
that has strictly positive values and converges can be normalized to such
a probability mass function. For smooth analytic functions that can be expressed
as a series of fractional powers and logarithms, the borderline between
converging and diverging is as follows: $\sum 1/k, \sum 1/(k \log k)$,
$\sum 1/(k \log k \log \log k)$ and so on diverge, while
$\sum 1/k^2, \sum 1/(k (\log k)^2)$,$\sum 1/(k \log k (\log \log k)^2)$
 and so on
converge. Therefore,
the maximal fat tail of a ``smooth'' function $f(x)$
with $\sum f(x) < \infty$  arises for functions at the edge of the convergence
family. The probability mass function $p(k)= c /(k (\log k)^2)$ is as close to the edge
as is reasonable.
Let us see what this
means for our algorithm using the chosen probability mass function $p(k)$
where we take $c = \frac{1}{2}$ for convenience.

For $n=32$ we can change any tree in ${\cal T}$ 
to any other tree in
${\cal T}$ with a squence of at most $5n-16 = 144$ simple 
mutations (Theorem~\ref{theo.mutations} 
in Appendix~\ref{sect.mutations}).
The probability of such a complex mutation
occurring is quite large with such a fat tail: 
$\approx 1/(2 \cdot 144 \cdot 7^2) = 
1/14112$. The expectation is
about 7 times in 100,000 generations. 
The $5n-16$ is a crude upper bound; we
believe that the real value is more likely to be about $n$ simple
mutations. The probability of a sequence of $n$ simple mutations
occurring is $\approx 1/(2 \cdot 32 \cdot 5^2) = 1/1600$. The
expectation increases to about 63 times in 100.000 generations.
If we can already get out of a local
minimum with only a 16-mutation, then this occurs with probability
is $1/512$, so it is expected about 195 times in 
100.000 generations, 
and with an 8-mutation the probability
is $1/144$, so the expectation is more than
694 times in 100.000 generations.
\end{remark}

\subsection{Performance}
The main problem with hill-climbing algorithms is that they can get stuck
in a local optimum. However, by randomly selecting a sequence of simple
mutations, longer sequences with decreasing probability, we essentially
run a 
of simulated annealing \cite{KGV83} algorithm at random temperatures.
Since there is a nonzero probability for every tree in ${\cal T}$
being transformed into every other tree in ${\cal T}$, there is zero
probability that we get trapped forever
in a local optimum that is not a global optimum.
That is, trivially:
\begin{lemma}
(i) The algorithm approximates the MQTC optimal solution monotonically
in each run.

(ii) The algorithm without termination condition
solves the MQTC optimization problem eventually with probability 1
(but we do not in general know when the optimum has been
reached in a particular run).
\end{lemma}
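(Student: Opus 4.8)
The plan is to treat the two parts separately, with part (i) being essentially a bookkeeping observation and part (ii) a standard ``positive recurrence'' argument exploiting the fat-tailed distribution $p(k)$.

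For (i), I would observe that the variable $T_0$ is overwritten only in Step~6, and then only when the freshly computed candidate $T$ satisfies $S(T) > S(T_0)$. Hence, along any single run, the sequence of values of $S(T_0)$ recorded after successive generations is nondecreasing. Since $S(T) = (M-C_T)/(M-m) \in [0,1]$ for every $T \in {\cal T}$, and an MQTC-optimal tree is by definition one maximizing $S$ (equivalently minimizing $C_T$), this nondecreasing sequence is bounded above by the optimal value $S(T^*)$. A bounded monotone sequence converges, and each replacement strictly raises $S(T_0)$ toward that bound; this is exactly the asserted monotone approximation to the MQTC optimum.

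For (ii), the facts I would assemble are: (a) by Theorem~\ref{theo.mutations} any tree in ${\cal T}$ can be transformed into any other by a finite sequence of at most $5n-16$ simple mutations; and (b) in any generation the drawn length $k$ has $p(k)>0$ for every $k \geq 1$, while, conditioned on $k$, each of the $k$ constituent simple mutations is chosen among the three types with equal probability and then among finitely many eligible node choices uniformly at random, so every \emph{specific} sequence of simple mutations of a given length is produced with strictly positive probability. Fixing a globally optimal tree $T^*$ and combining (a) with (b), for any current tree $T_0$ there is a specific $k$-mutation (with $k \le 5n-16$) carrying $T_0$ to $T^*$, and the probability that Steps~4--5 generate exactly this mutation is strictly positive. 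Because ${\cal T}$ is finite, I would then take the minimum of these probabilities over all possible current trees $T_0$ to obtain a uniform lower bound $\delta > 0$ on the probability that a single generation, started from whatever the current best tree happens to be, produces a globally optimal candidate.

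Given $\delta$, the conclusion follows by a routine estimate. Conditioned on any history in which the optimum has not yet been reached, the candidate $T$ of the next generation equals a globally optimal tree with probability at least $\delta$; should this occur, $S(T) = S(T^*) \geq S(T_0)$, so after Step~6 the tree $T_0$ carries an optimal score, and by part (i) it retains an optimal score forever after. Chaining these conditional bounds, the probability that the optimum has not been attained within the first $m$ generations is at most $(1-\delta)^m$, which tends to $0$ as $m \to \infty$; hence the probability that the optimum is never attained is $0$, giving attainment with probability $1$ in finitely many generations. The only genuinely delicate point is (b) --- verifying that the event ``produce this exact mutation sequence'' has positive probability --- which rests on $p(k)$ being supported on all of ${\cal N}$ (the very reason the fat tail was chosen) and on the mutation choices being uniform over finite, nonempty sets; the remainder is a finiteness argument together with the elementary limit $(1-\delta)^m \to 0$.
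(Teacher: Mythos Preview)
Your proposal is correct and follows the same line as the paper, which in fact offers no proof beyond the sentence ``Since there is a nonzero probability for every tree in ${\cal T}$ being transformed into every other tree in ${\cal T}$, there is zero probability that we get trapped forever in a local optimum that is not a global optimum. That is, trivially:'' preceding the lemma. Your argument is precisely the rigorous unpacking of that remark: part~(i) from the strict-improvement acceptance rule in Step~6, and part~(ii) from Theorem~\ref{theo.mutations}, the full support of $p(k)$, finiteness of ${\cal T}$, and the standard $(1-\delta)^m\to 0$ estimate; the only cosmetic slip is writing $S(T)=S(T^*)\geq S(T_0)$ where, under your conditioning on the optimum not yet being reached, the inequality is in fact strict (which is what Step~6 requires).
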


\begin{figure}[htb]
\begin{center}
\epsfig{file=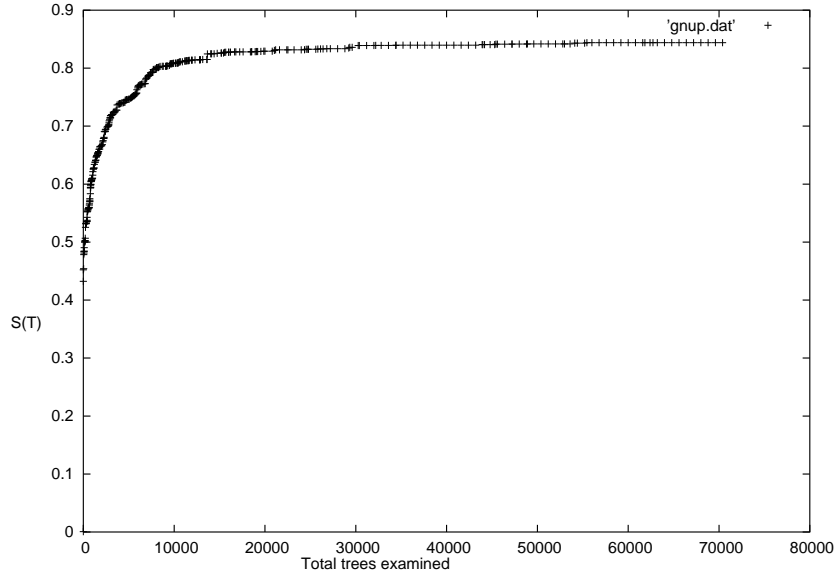,width=3in,angle=270}
\end{center}
\caption{Progress of a 60-item data set experiment over time}\label{figprogress}
\end{figure}

The main question therefore is the convergence speed of the algorithm
on natural data in terms of $S(T)$ value, 
and a termination criterion to terminate the
algorithm when we have an acceptable approximation. Practically,
from Figure~\ref{figprogress} it appears that improvement in terms of $S(T)$
eventually gets less and less (while improvement is still possible) in terms of
the expended computation time. Theoretically, this is explained
by
Theorem~\ref{theo.ptas} which tells us that there is no
polynomial-time approximation scheme for MQTC optimization.
Whether our approximation scheme is expected 
polynomial time seems to require
proving that the involved Metropolis chain is rapidly mixing
\cite{Vi00}, a notoriously hard and generally unsolved problem.
However, in our experiments there is
unanimous evidence that for the natural data and the cost
function we have used, convergence to close to the optimal
value is always fast.

The running time is determined as follows. We have to
determine the cost of ${n \choose 4}$ quartets to determine
each $S(T)$ value in each generation. Hence, trivially,
\begin{lemma}
The algorithm in Figure~\ref{fig.alg} runs in time $\Omega (n^4)$
per generation where $n$ is the number of objects.
(The implementation uses even $\Omega (n^5)$ time.)
\end{lemma}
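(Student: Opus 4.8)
The plan is to pin down the single dominant cost in one generation, namely the evaluation of $S(T)$, and then read off the bound from the size of the sum that defines it. First I would recall from Definition~\ref{def.st} that $S(T) = (M-C_T)/(M-m)$, where $M$ and $m$ are the worst and best summed costs. These two quantities depend only on $N$ and $C$, not on the current tree $T$, so they can be computed once in a preprocessing phase and reused across all generations; consequently the per-generation work is governed entirely by the computation of the tree cost $C_T$. By Definition~\ref{def.costs},
\[
C_T = \sum_{\{u,v,w,x\} \subseteq N} \{ C(uv|wx): T \text{ is consistent with } uv|wx \},
\]
which is a sum indexed by all $\binom{n}{4}$ quartets of $N$.

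Next I would observe that the algorithm as specified in Figure~\ref{fig.alg} computes an $S(T)$ value in every generation: explicitly in {\em Step 2} for the initial tree and in {\em Step 6} for each mutated candidate. Since each such evaluation reduces to forming the sum above, and that sum ranges over $\binom{n}{4} = \Theta(n^4)$ index sets, the algorithm must perform at least $\Omega(n^4)$ elementary operations per generation simply to accumulate $C_T$. This already yields the first claim. For the parenthetical $\Omega(n^5)$ statement about the actual implementation, I would argue that the straightforward way to decide, for a fixed quartet $\{u,v,w,x\}$, which of its three topologies is embedded in $T$ is to apply the consistency criterion of Definition~\ref{def.consistent} directly, i.e. to trace the paths between the relevant leaf pairs and test whether they cross. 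Tracing a path in a tree on $2n-2$ nodes costs $O(n)$ in the worst case, so handling all $\binom{n}{4}$ quartets in this naive manner incurs $\Omega(n \cdot n^4) = \Omega(n^5)$ time, matching the stated implementation bound.

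I do not expect any genuine obstacle here; as the surrounding text notes, the result is essentially immediate once the bottleneck is identified. The only point requiring mild care is to be clear that the bound is a statement about the algorithm exactly as described, rather than an unconditional lower bound on computing $S(T)$ by any means, so I would emphasize that the $\binom{n}{4}$-term summation in Definition~\ref{def.costs} is invoked verbatim in each generation. The contrast with this bound is precisely what motivates the optimized $O(n^3)$-per-generation scheme announced for Section~\ref{sect.previous}, where $C_T$ is maintained incrementally under local mutations instead of being recomputed from scratch.
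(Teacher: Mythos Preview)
Your argument is correct and matches the paper's own reasoning, which simply notes that each generation must evaluate $S(T)$ by summing over all $\binom{n}{4}$ quartets, and attributes the extra factor of $n$ in the implementation to path-length computations (``certain details proportional to the internal path length''), exactly as you reconstruct. One small correction to your closing aside: the $O(n^3)$ scheme in Section~\ref{sect.previous} does not maintain $C_T$ incrementally under mutations; it recomputes $C_T$ from scratch each generation via a different decomposition $C_T = \sum_{p \in I} C(p)$ over internal nodes, exploiting the additive distance-based form of the quartet costs.
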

\begin{remark}
The input to the algorithm in Figure~\ref{fig.alg} is the
quartet topology costs.
If one constructs the quartet-topology costs from more basic quantities,
such as the cost of $ab|cd$ equals the sum of the distances $d(a,b)+d(c,d)$ for
some distance measure $d(\cdot, \cdot)$, then one can use the additional
structure thus supplied to speed up the algorithm as 
in Section~\ref{sect.previous}. Then,
while the original implementation
of the algorithm uses as much as $\Omega (n^5)$ time per
generation it is sped up to $O(n^3)$ time per generation, 
Lemma~\ref{lem.previous}, and we were able to analyze 
a 260-node tree in about 3 hours cpu time reaching $S(T) \approx 0.98$.
\end{remark}
In experiments we found that for the same data set different
runs consistently showed the same behavior, for example
Figure~\ref{figprogress} for a 60-object computation. There
the $S(T)$ value leveled off at about 70,000 examined trees,
and the termination condition was ``no improvement in 5,000 trees.''
Different random runs of the algorithm nearly always gave the same behavior,
returning a tree with the same $S(T)$ value, albeit a different tree in
most cases since here $S(T) \approx 0.865$, a relatively low value.
That is, there are many ways to find a tree of optimal
$S(T)$ value if it is low, and apparently the algorithm never got trapped
in a lower local optimum. For problems with high $S(T)$ value
the algorithm consistently returned the same
tree. 

Note that if a tree is ever found such that $S(T) = 1$, then we can stop
because we can be certain that this tree is optimal, as no tree could
have a lower cost.  In fact, this perfect tree result is achieved in our
artificial tree reconstruction experiment (Section~\ref{sect.artificial})
reliably for 32-node trees in a few seconds using the improvement
of Section~\ref{sect.previous}.
For real-world data, $S(T)$ reaches a maximum somewhat
less than $1$. This presumably reflects distortion of the information
in the cost function
data by the best possible tree representation, 
or indicates getting stuck in a local optimum. 
Alternatively, the search space is too large
to find the global optimum.

On typical problems of up to 40 objects this tree-search gives a tree
with $S(T) \geq 0.9$ within half an hour (fot the unimproved version) and
a few seconds with the improvement of Section~\ref{sect.previous}. 
 For large numbers of objects,
tree scoring itself can be slow especially without the improvements
of Section~\ref{sect.previous}.
Current single computers can score a tree of this size with the 
unimproved algorithm in about a minute.
For larger experiments, we used the C program called
partree (part of the CompLearn package \cite{Ci03})
with MPI (Message Passing Interface, a common
standard used on massively parallel computers) on a cluster of workstations in
parallel to find trees more rapidly.  We can consider the graph
mapping the achieved $S(T)$ score as a function
of the number of trees examined.  Progress
occurs typically in a sigmoidal fashion towards a maximal value $\leq 1$,
Figure~\ref{figprogress}.

\subsection{Termination Condition}
The {\em termination condition} is of two types and which type
is used determines the number of objects we can handle.

{\em Simple termination condition:} We simply run the
algorithm until it seems
no better trees are being found in a reasonable amount of time.
Here we typically terminate if no improvement in $S(T)$ value is
achieved within 100,000 examined trees. This criterion is simple
enough to enable us to hierarchically cluster data sets up to 80
objects in a few hours even without the improvement
in Section~\ref{sect.previous} and at least up to 300
objects with the improvement. This is way above the 15--30 objects in
the previous exact (non-incremental) methods (see Introduction).

{\em Agreement termination condition:} In this more sophisticated method we
select a number $2 \leq r \leq 6$ of runs, and we run $r$ invocations
of the algorithm in parallel. Each time an $S(T)$ value in run $i=1, \ldots, r$
is increased in this process it is compared with the $S(T)$ values
in all the other runs. If they are all equal, then the candidate trees
of the runs are compared. This can be done by simply comparing the
ordered lists of embedded quartet topologies, in some standard order.
This works
since the  set of embedded quartet topologies uniquely
determines the quartet tree by \cite{Bu71}. If the $r$ candidate trees
are identical, then terminate with this quartet tree as output, otherwise
continue the algorithm.

This termination condition takes (for the same number of steps per run)
about $r$ times as long as the simple termination condition.
But the termination
condition is much more rigorous, provided we choose $r$
appropriate to the number $n$
of objects being clustered.
 Since all the runs are randomized independently at startup,
it seems very unlikely that with natural data
all of them get stuck in the same local
optimum with the same quartet tree instance,
provided the number $n$ of objects being clustered is not
too small. For $n = 5$ and the number of invocations $r=2$,
there is a reasonable probability that the two different
runs by chance hit the same tree in the same step. This phenomenon
leads us to require more than two successive runs with exact agreement before
we may reach a final answer for small $n$.  In the case of $4\le n \le 5$, we
require 6 dovetailed runs to agree precisely before termination.  For $6 \le n
\le 9$, $r = 5$. For $10 \le n \le 15$, $r = 4$.  For $16 \le n \le 17$,
$r = 3$.  For all other $n \ge 18$, $r = 2$.  This yields a reasonable tradeoff
between speed and accuracy. These specifications of $r$-values
relative to $n$ are partially common sense, partially empirically derived.

It is clear that there is only one tree with $S(T)=1$ (if that is possible for
the data), and it is straightforward
that random trees (the majority of all possible quartet trees) have
$S(T) \approx 1/3$.  This gives evidence that the number of quartet
trees with large $S(T)$ values is much smaller than the number of trees with
small $S(T)$ values.  It is furthermore evident that the precise relation
depends on the data set involved, and hence cannot be expressed by a general
formula without further assumptions on the data. However, we can
safely state that small data sets, of say $\leq 15$ objects,
that in our experience often lead to $S(T)$ values close to 1 and
a single resulting tree have
very few quartet trees realizing the optimal $S(T)$ value. On the other
hand, those large sets of 60 or more objects that contain some inconsistency
and thus lead to a low final $S(T)$ value also tend to 
exhibit more variation in the resulting trees.
This suggests that in the agreement
termination method each run will get stuck in a different quartet
tree of a similar $S(T)$ value, so termination with the same tree
is not possible. Experiments show that with the rigorous agreement termination
we can handle sets of up to 40 objects, 
and with the simple termination
up to at least 80--200 objects on a single 
computer with varying degrees of quality and consistency depending on the data
involved,
even without the 
improvements of Section~\ref{sect.previous}.
Basically the algorithm evaluates
all quartet topologies in each generated tree, which leads to an $\Omega(n^4)$
algorithm per generation or $O(n^3)$ per generation for the improved
version in Section~\ref{sect.previous}. 
With the improvement one can 
attack problems of over 300 objects. 

Recently, \cite{CDGKP08} has
used various other heuristics different from the ones presented here
to obtain methods that are both faster and yield better results than our old
implementation. But even the best heuristic there appears 
to have a slower running time for natural data
(with $n=32$ typically over 50\%) than
our current implementation (in CompLearn) using the speedups
of Section~\ref{sect.previous}.

\subsection{Tree Building Statistics}
We used the CompLearn package, \cite{Ci03},
to analyze a ``10-mammals'' example with {\em zlib} compression
yielding a  $10 \times 10$ distance matrix, similar to the examples in
Section~\ref{sect.nat}.
The algorithm starts with four randomly initialized trees.
It tries to improve each one randomly and finishes when they match.
Thus, every run produces an output tree, a maximum score associated
with this tree, and has examined some total number of trees,
$T$, before it finished.
Figure~\ref{fig.plot}
shows a graph displaying a histogram
of $T$ over one thousand runs of the distance matrix.  The $x$-axis
represents a number of trees examined in a single run of the program, measured
in thousands of trees and binned in 1000-wide histogram bars.  The maximum
number is about 12000 trees examined.  \begin{figure}[htb]
\begin{center}
\epsfig{file=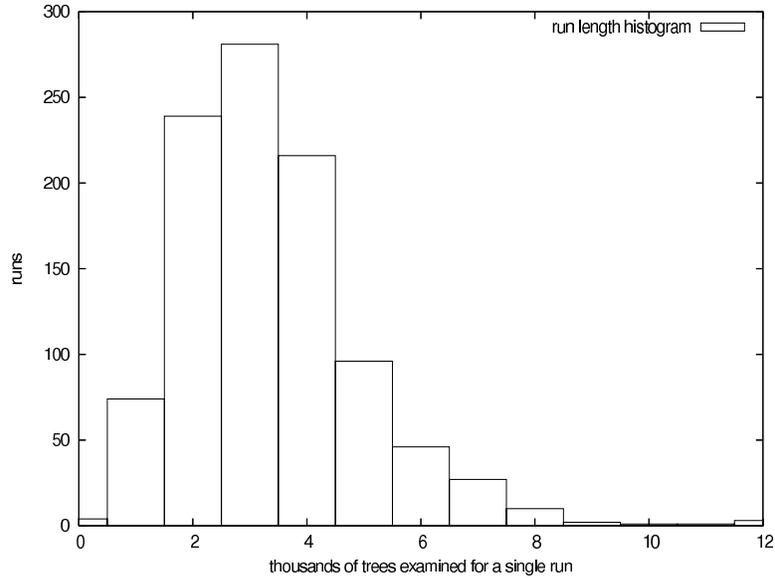,width=4in,height=3in}
\end{center}
\caption{Histogram of run-time number of trees examined before termination.}\label{fig.plot}
\end{figure}
The graph suggests a Poisson probability mass function.
 About $2/3$rd of the trials take less than 4000 trees.  In the thousand trials
above, 994 ended with the optimal $ S(T) = 0.999514 $.  The remaining six runs
returned 5 cases of the second-highest score, $  S(T) = 0.995198 $ and one case
of $ S(T) = 0.992222 $.  It is important to realize that outcome stability is
dependent on input matrix particulars.

Another interesting probability mass function is the mutation stepsize.
Recall that the mutation length is drawn from a shifted fat-tail probability mass function.
But if we restrict our attention to just the mutations
that improve the $S(T)$ value, then we may examine these statistics
to look for evidence of a modification to this distribution due to,
for example, the presence of very many isolated areas that have only
long-distance ways to escape.  Figure~\ref{fig.mutplot}
 shows the histogram
\begin{figure}[htb]
\begin{center}
\epsfig{file=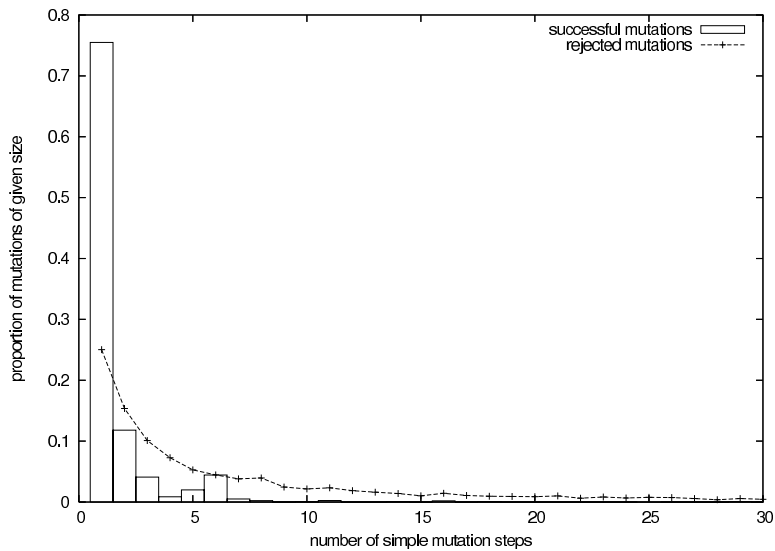,width=4in}
\end{center}
\caption{Histogram comparing probability mass functions of $k$-mutations per run.}\label{fig.mutplot}
\end{figure}
of successful mutation lengths (that is, number of simple mutations
composing a single “kept” complex mutation) and rejected lengths (both
normalized) which shows that this is not the case.
Here the $x$-axis is the number of mutation steps and the $y$-axis is
the normalized proportion of times that step size occurred.
This gives good empirical evidence that in this case,
at least, we have a relatively easy search space, without large gaps.

\subsection{Controlled Experiments}

With natural data sets, say genomic data, one may have the preconception
(or prejudice) that primates should be clustered together,
rodents should be clustered together, and so should ferungulates.
However, the genome of a marsupial
may resemble the genome of a rodent more than that of a
monotreme, or vice versa---the very question one wants to resolve.
Thus, natural
data sets may have ambiguous, conflicting, or counter intuitive
outcomes. In other words, the experiments on natural data sets have
the drawback of not having an objective clear ``correct'' answer that can
function as a benchmark for assessing our experimental outcomes,
but only intuitive or traditional preconceptions.
In Section~\ref{sect.artificial} the
experiments show that our
program indeed does what it is supposed to do---at least in
these artificial situations where we know in advance what 
the correct answer is.

\section{Improved Running Time}\label{sect.previous}
Recall that our quartet heuristic  
consists of two parts: (i) extracting the quartet topology costs
from the data, and (ii) repeatedly randomly mutating the current quartet tree
and determining the cost of the new tree. Both the MQTC problem
and the original heuristic is actually
concerned only with item (ii). To speed up the method we also
look at how the quartet topology costs can be derived from the distance matrix, that is,
we also look at item (i). 

{\bf Speedup by faster quartet topology cost computation:}
Assume that
the cost of a quartet topology is the sum
of the distances between each pair of neighbors
\begin{equation}\label{eq.costs}
C(uv|wx) = d(u,v) + d(w,x),
\end{equation}
for a given distance measure $d$. In the original implementation
of the heuristic one used the quartet topology costs to
calculate the $S(T)$ value of a tree $T$ without worrying
how those costs arose. Then, the heuristic runs in time order $n^4$
per generation
(actually $n^5$ if one counts certain details proportional 
to the internal path
length
as they were implemented).
If the quartet topology costs are derived according to
\eqref{eq.costs}, then we can reduce the running
time of the implementation by two orders of magnitude.
 
{\bf Subroutine  with
distance matrix input:} 

We compute the quartet topology costs according to \eqref{eq.costs}.
Let there be $n$ leaves in a ternary tree $T$. 
For every internal node (there are
$n-2$) determine a sum as follows. Let $I$ be the set of internal nodes
(the nodes in $T$ that are not leaves). There are three edges incident
with the internal node $p \in I$, say $e_1$, $e_2$ and $e_3$. Let the subtrees
attached to $e_i$ have $n_i$ leaves ($i=1,2,3$) so that 
$n=n_1+n_2+n_3$. For every edge $e_i$ there are ${{n_i} \choose 2}$
pairs of leaves in its subtree $T_i$. Each such pair can form a quartet
with pairs $(u,v)$ sich that $u$ is a leaf in the subtree $T_j$ of $e_j$
and $v$ is a leaf in the subtree $T_k$ of $e_k$. Define
$C(e_i)={{n_i} \choose 2} \sum_{u \in T_j, v \in T_k} d(u,v)$
 ($1 \leq i,j,k \leq 3$
and $i,j,k$ are not equal to one another). 
For internal node $p$ let the cost $C(p)=C(e_1)+C(e_2)+C(e_3)$. 
Compute the cost
$C_T$ of the tree $T$ as $C_T = \sum_{p \in I} C(p)$.

\begin{lemma} The cost
$C_T$ of the tree $T$ satisfies $C_T = \sum_{p \in I} C(p)$. If the tree $T$
has $n$ leaves, then the subroutine above runs in time
$O(n^2)$ per internal node and
hence in $O(n^3)$ overall to determine the sum $C_T$.
\end{lemma}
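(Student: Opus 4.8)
The plan is to prove the two assertions separately: first the identity $C_T = \sum_{p \in I} C(p)$ by a contribution (double-counting) argument, and then the running-time bound by a direct accounting of the work per internal node.

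For the identity, I would start by rewriting each term ${n_i \choose 2}\sum_{u \in T_j, v \in T_k} d(u,v)$ as $\sum_{\{s,t\} \subseteq T_i}\sum_{u \in T_j, v \in T_k} d(u,v)$, since ${n_i \choose 2}$ is exactly the number of leaf pairs $\{s,t\}$ inside $T_i$. In this form every summand is indexed by a pair $\{s,t\} \subseteq T_i$ together with leaves $u \in T_j$, $v \in T_k$, i.e. by a quartet $\{s,t,u,v\}$ with a 2-1-1 split at $p$ (two of its leaves in one subtree of $p$, one in each of the other two). For such a quartet the path from $s$ to $t$ stays inside $T_i$ while the path from $u$ to $v$ crosses $p$, so by Definition~\ref{def.consistent} the embedded topology is $st|uv$, and the summand $d(u,v)$ is precisely the contribution of the cherry $\{u,v\}$ to the quartet cost in \eqref{eq.costs}. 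Thus $\sum_{p \in I} C(p)$ is a sum, over internal nodes $p$, of cherry-distances of embedded quartets.

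The heart of the argument is then to show this sum counts each cherry of each embedded quartet exactly once. I would fix an embedded quartet with topology $ab|cd$ (Definition~\ref{def.consistent}) and track where its two cherry terms $d(a,b)$ and $d(c,d)$ are produced. The minimal subtree spanning $\{a,b,c,d\}$ is a double cherry: $a,b$ meet at a branch node $p_1$, the pair $c,d$ meet at a branch node $p_2$, and $p_1,p_2$ are joined by a path (with $p_1 \neq p_2$ since the tree is ternary, hence the quartet is resolved). I claim $d(a,b)$ is produced at $p_1$ and nowhere else: at $p_1$ the leaves $a$ and $b$ lie in two distinct subtrees while $c$ and $d$ both lie in the third (the one meeting the path to $p_2$), which is exactly the configuration making $\{a,b\}$ the counted cherry. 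For any other node $p$ on the path from $a$ to $b$, a short case analysis shows $c,d$ fall on the same side as $a$ or as $b$, so the quartet lacks the 2-1-1 split isolating $\{c,d\}$ and $d(a,b)$ is not produced there; at nodes off this path $a,b$ are not separated at all. By symmetry $d(c,d)$ is produced exactly at $p_2$. Summing over all embedded quartets and both of their cherries then yields $\sum_{p\in I}C(p)=C_T$, as defined in Definition~\ref{def.costs}.

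For the running time, the work at a fixed internal node $p$ is dominated by forming the three sums $\sum_{u \in T_j, v \in T_k} d(u,v)$. After one $O(n)$ traversal to label each leaf with the subtree of $p$ it belongs to (and to read off $n_1,n_2,n_3$), each such sum ranges over at most $n_j n_k \le n^2$ leaf pairs and is computed in $O(n^2)$ time; multiplying by the precomputed ${n_i \choose 2}$ and adding the three contributions keeps $C(p)$ at $O(n^2)$. Since $|I| = n-2$, the total is $O(n^3)$. I expect the main obstacle to be the uniqueness half of the counting argument, namely proving that each cherry term is produced at \emph{most} once: existence (that every cherry is produced somewhere) is immediate once the branch point is identified, but ruling out spurious contributions at the remaining nodes of the $a$-$b$ path requires the geometric case analysis sketched above, and making those cases exhaustive and mutually exclusive is where the real content lies. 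The running-time bound, by contrast, is routine.
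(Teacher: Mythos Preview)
Your proposal is correct and follows essentially the same approach as the paper: both arguments unfold $\binom{n_i}{2}$ into a sum over pairs, identify each summand $d(u,v)$ with one cherry of one embedded quartet, and then show that every cherry of every embedded quartet is counted exactly once at a unique internal node (the one where that cherry's two leaves lie in distinct subtrees while the other cherry sits in the third). Your treatment of the uniqueness direction—the case analysis over nodes on and off the $a$--$b$ path—is more explicit than the paper's, which simply asserts that the pair-to-node assignment is disjoint across internal nodes, but the content is the same; the running-time argument is likewise identical.
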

\begin{proof}
An internal 
node $p$ has three
incident edges, say $e_1,e_2,e_3$. 
Let $T_i$ be the subtree rooted at $p$ containing edge $e_i$
having $n_i$ leaves ($1 \leq i \leq 3$) so that $n=n_1+n_2+n_3$.
Let $(u,v)$ be a leaf pair with $u\in T_i$ and $v_\in T_j$,
and $w \neq x$ be leaves in $T_k$ ($1 \leq i,j,k \leq 3$
and $i,j,k$ unequal). Let $Q(p)$ be defined as
the set of all these 
quartet topologies $uv|wx$. Clearly,
if $p,q \in I$ and $p \neq q$, then $Q(p) \bigcap Q(q) = \emptyset$.
The $(u,v)$ parts of quartet topologies $uv|wx \in Q(p)$ 
have a summed cost $C(e_i)$ given by
$C(e_i)={{n_i} \choose 2} \sum_{u \in T_j, v \in T_k} d(u,v)$.
Then, the cost of $Q(p)$ is $C(p) = C(e_1)+C(e_2)+C(e_3)$. 

Every quartet topology $uv|wx$ in tree $T$ is composed of two pairs
$(u,v)$ and $(w,x)$. Every pair $(u,v)$ determines an internal node
$p$ such that the paths from $u$ to $p$ and from $v$ to $p$
are node disjoint except for the end internal node $p$. 
Every internal node $p$ determines a set of such pairs $(u,v)$,
and for different internal nodes the associated sets are disjoint.
All quartet topologies embedded in the tree $T$ occur this way.
Hence, 
\begin{equation}\label{eq.qt}
\bigcup_{p \in I} Q(p)=Q_T, 
\end{equation}
where $Q_T$ was earlier defined as the set of quartet topologies
embedded in $T$.  
By \eqref{eq.qt}, the summed cost $C_T$ 
(Definition ~\ref{def.costs}) of all quartet topologies embedded in $T$ 
satisfies $C_T= \sum_{p \in I} C(p)$.

The running time of determining 
$C(p)$
for a $p \in I$ is dominated by the summing of the $d(u,v)$'s 
for the pairs
$(u,v)$ of leaves in different subtrees with $p$ as root.
There are $O({n \choose 2})=O(n^2)$ such pairs.
Since $|I|=n-2$ the lemma follows.
\end{proof}

This immediately yields the following:
\begin{lemma}\label{lem.previous}
With as input a distance matrix between $n$ objects,
the quartet topology costs as in \eqref{eq.costs}, 
the subroutine above lets
the algorithm in Figure~\ref{fig.alg} (and its implementation)
run in time $O (n^3)$
per generation.
\end{lemma}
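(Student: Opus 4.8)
The plan is to derive Lemma~\ref{lem.previous} as a short corollary of the preceding (unlabeled) lemma, so the real work lies in confirming that evaluating a candidate tree in one generation costs no more than the $O(n^3)$ already established for computing $C_T$. First I would pin down what ``one generation'' means: by Figure~\ref{fig.alg} it is one pass through Steps 4--7, namely drawing the mutation length $k$, composing and applying a $k$-mutation to $T_0$ to obtain a candidate tree $T$, evaluating $S(T)$, and comparing it with $S(T_0)$.

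The key observation is that scoring $T$ reduces to computing $C_T$. Since $S(T)=(M-C_T)/(M-m)$ and both $M$ and $m$ depend only on $N$ and $C$ (not on $T$), I would compute $M$ and $m$ once, before the main loop, and treat them as fixed constants thereafter; they therefore contribute nothing to the per-generation cost. Hence evaluating $S(T)$ in a generation is, up to $O(1)$ arithmetic, exactly the task of computing $C_T$ for the candidate tree. Invoking the preceding lemma under the cost structure \eqref{eq.costs} gives this in $O(n^3)$ time, which I claim is the dominant term.

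It remains to bound the other per-generation operations and check they are of lower order. Each simple mutation --- a leaf interchange, subtree interchange, or subtree transfer --- requires locating the chosen nodes and the connecting path and rewiring a constant number of edges, which is $O(n)$; applying a $k$-mutation is thus $O(kn)$, and comparing the resulting $S$-values and updating $T_0$ is $O(1)$. The main (and essentially only) obstacle is to argue that this mutation-application overhead does not dominate: for the mutation lengths that actually arise it is negligible against the $O(n^3)$ re-scoring, so the per-generation time is governed by the single call to the $O(n^3)$ subroutine. Collecting these bounds yields $O(n^3)$ per generation, as claimed. The one point I would be careful to flag is that this speedup hinges on re-scoring via the subroutine of the preceding lemma rather than the naive summation over all ${n \choose 4}$ quartets, which would reinstate the $\Omega(n^4)$ (implemented $\Omega(n^5)$) cost.
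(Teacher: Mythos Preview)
Your proposal is correct and matches the paper's approach: the paper gives no explicit proof at all, simply writing ``This immediately yields the following'' before stating the lemma, so your argument---reducing the per-generation cost to the $O(n^3)$ computation of $C_T$ from the preceding lemma, with $M,m$ precomputed and the mutation/bookkeeping steps lower-order---is exactly the intended reasoning, just spelled out in more detail than the paper bothers to.
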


{\bf Speedup by MMC:} A Metropolis Markov Chain (MMC) \cite{MRRTT53}
 is implemented
inside the mutation chains of the algorithm. So,
instead of doing a qroup of mutations and at the
end check if the result improves upon the
original (that is, hill climbing), we do the following. After every mutation, a
Metropolis acceptance step is performed, rolling
back the changes when the step is rejected.
Acceptance is calculated on the raw scores of the
tree (unnormalized, thus being more selective
with larger trees). During the Metropolis walk,
the best tree found is kept, at the end this best
tree is returned and checked for improvement
(hill climbing).
This serves three purposes:
\begin{itemize}
\item
The search is faster because after every change, the
trees are focused on improving the $S(T)$ value.
This gives less spurious drift.
\item
The global search behavior is maintained, as there
is a nonzero probability that a tree is
transformed into any other tree.
\item
There is less dependency on the number of
mutations to perform in an individual step. It is
no longer necessary to try a few mutations
more often than many mutations, simply because
the trees are no longer allowed to drift away
very far from the current best in an unchecked
manner.
\end{itemize}
By Theorem~\ref{theo.mutations} in 
Appendix~\ref{sect.mutations}, every tree with $n-2$ unlabeled
internal nodes and $n$ labeled 
leaves can be transformed in every other
such tree in at most $5n-16$ simple mutations ($n \geq 4$).
We believe that the real value of the number of required
simple mutations is
about $n$, and therefore have set the trial length
to $n$.  The setting  does influence the
global search properties of the algorithm, longer
trial length meaning larger probability of
finding the global optimum. In the limit of
infinite trial length, the algorithm will behave
as a regular MMC algorithm with associated
convergence properties.

The newest version of the MQTC heuristic
is at \cite{Ke} and has been incorporated in
CompLearn \cite{Ci03} from version 1.1.3 onwards. 
Altogether, with both
types of speedup, the resulting speedup is
at least of the order of 1.000 to 10.000 for common sets of
objects with, say, $n \leq 300$. 

\section{Compression-based Distance}\label{sect.ncd}
To be able to make unbiased comparisons between phylogeny
reconstruction algorithms that take distance matrices as input, 
we use the
compression-based \NCD distance.
This metric distance
was co-developed by us in \cite{LBCKKZ01,Li01,Li03}, as a normalized
version of the ``information metric'' of \cite{BGLVZ98,LiVi97}.
The mathematics used is based on Kolmogorov complexity theory \cite{LiVi97},
which is approximated using real-world compression software.
Roughly speaking, two objects are deemed close if
we can significantly ``compress'' one given the information
in the other, the idea being that if two pieces are more similar,
then we can more succinctly describe one given the other.
Let $Z(x)$ denote  the
binary length of the file $x$ compressed with compressor $Z$ (for example
"gzip", "bzip2", or "PPMZ"). The
{\em normalized compression distance} (\NCD) is defined as
\begin{equation}\label{eq.ncd}
NCD(x,y) = \frac{Z(xy) - \min \{Z(x),Z(y)\}}{\max \{Z(x),Z(y)\}},
\end{equation}
which is actually a family of distances parameterized with the
compressor $Z$.
The better $Z$ is, the
better the results are, \cite{CV04}.
This \NCD is used as distance $d$ in \eqref{eq.costs} to obtain the quartet
topology costs.

The \NCD in \eqref{eq.ncd} and a precursor 
have initially been applied to, among others,
alignment-free whole genome phylogeny, \cite{LBCKKZ01,Li01,Li03},
chain letter phylogeny \cite{BLM03},
constructing language trees \cite{Li03},
and plagiarism detection \cite{SID}.
It is in fact a parameter-free,
feature-free, data-mining tool.
A variant  has been experimentally tested on all time sequence data used in
all the major data-mining conferences in the last decade \cite{Ke04}.
That paper compared the compression-based method 
with all major methods used in
those conferences. The compression-based method
was clearly superior
for clustering heterogeneous data, and for
anomaly detection, and was competitive in clustering domain data.
The \NCD method turns out to be robust under change of the underlying
compressor-types: statistical (PPMZ), Lempel-Ziv based  dictionary (gzip),
block based (bzip2), or special purpose (Gencompress).
While there may be more appropriate special-purpose distance measures
for biological phylogeny, incorporating decades of research, the \NCD
is a robust objective platform to test the unbiased performance
of the competing phylogeny reconstruction algorithms.

\subsection{CompLearn Toolkit}\label{sect.complearn}
Oblivious to the problem area concerned, simply using the distances
according to the \NCD of \eqref{eq.ncd} and the 
derived quartet topology costs \eqref{eq.costs},
the MQTC heuristic described in Sections \ref{sect.mc}, \ref{sect.previous} 
fully automatically
clusters the objects concerned.
The method has been released in the public domain as open-source software:
The CompLearn Toolkit \cite{Ci03} is a suite
of simple utilities that one can use to apply compression
techniques to the process of discovering and learning patterns
in completely different domains, and hierarchically cluster them
using the MQTC heuristic.
In fact, CompLearn is so general that it requires
no background knowledge about any particular
subject area. There are no domain-specific parameters to set,
and only a handful of general settings. From CompLearn version 1.1.3
onwards the speedups and
improvements in Section~\ref{sect.previous} have been implemented.

\subsection{Previous Experiments}\label{sect.prevexp}
Using the CompLearn package, in
\cite{CV04} we studied hypotheses concerning mammalian evolution, by
reconstructing the phylogeny from
the mitochondrial genomes of 24 species.
These were downloaded from the
GenBank Database on the Internet.
In another experiment, we used the
mitochondrial genomes of molds and yeasts.
We clustered the SARS virus after its
sequenced genome was made publicly available,
 in relation to potentially similar viruses.
The \NCD distance matrix was computed using the compressor bzip2.
The resulting tree $T$ (with $S(T)=0.988$) was very similar to the
definitive tree based on medical-macrobio-genomics analysis,
appearing later in the New England Journal of Medicine,
\cite{SA03}.
In \cite{Ci07}, 
100 different H5N1 sample genomes were downloaded from the NCBI/NIH
database online, to analyze the geographical spreading
of the Bird Flu H5N1 Virus in a large example.

In general hierarchical clustering,
we constructed language trees,
cluster both Russian
authors in Russian, Russian authors in English translation, English
authors, handwritten digits given as two-dimensional OCR data,
and astronomical data. 
We also tested gross classification of files
based on heterogeneous data of markedly different file types:
genomes, novel excerpts, music files in MIDI format,  Linux x86 ELF executables,
and compiled Java class files, \cite{CV04}.
In \cite{CVW03},  MIDI data were used to cluster classical music,
distinguish between
genres like pop, rock, and classical, and do music classification.
In \cite{We05}, the CompLearn package was used to
analyze network traffic  and to cluster computer worms and viruses.
CompLearn was used to analyze medical clinical data
in clustering fetal heart rate tracings \cite{CBVA06}.
Other applications by different authors are in
software metrics and obfuscation, web page authorship,
topic and domain identification,
protein sequence/structure classification,
phylogenetic reconstruction, hurricane risk assessment,
ortholog detection,
and other topics.
Using code-word lengths
obtained from the page-hit counts returned by Google from the Internet,
we obtain a semantic distance between {\em names} for objects
(rather than the objects themselves) using the \NCD formula and viewing
Google as a compressor. 

Both the compression method and the Google method
have been used many times to obtain distances between objects
and to hierarchically cluster the data using CompLearn \cite{Ci03}. 
In this way, the MQTC method and heuristic described here has 
been used extensively. For instance, 
in many of the references in Google scholar
to \cite{CVW03,CV04,CV07}.
Here we give a first full and complete treatment of the MQTC problem,
the heuristic, speedup, and comparison to other methods.

\section{Comparing Against SplitsTree}\label{sect.splitstree}
We compared the performance of the MQTC heuristic as implemented in the
CompLearn package 
 against that of a leading application to 
compute phylogenetic trees, a 
program called SplitsTree~\cite{Hu06}.  Other methods include 
\cite{CL,SWR07,BaBe08}.
Our experiments
were initially performed with CompLearn version 0.9.7 
before the improvements in Section~\ref{sect.previous}. 
But with the improvements of 
Section~\ref{sect.previous} in CompLearn version 1.1.3 and later, 
sets of say 34 objects
terminated commonly in about 8 cpu seconds.
Below we use sets of 32 objects.
We choose SplitsTree version 4.6 for comparison and selected three tree
reconstruction methods to benchmark: NJ, BioNJ, and UPGMA. 
To make
comparison possible, we require a tree reconstruction implementation
that takes a distance matrix as input. This requirement ruled
out some other possibilities, and motivated our choice. 
To score the quality of the trees produced by CompLearn
and SplitsTree we converted the SplitsTree
output tree to the CompLearn output format.
Then we used the $S(T)$ values in the CompLearn output and
the converted SplitsTree output to compare the two.
The quartet topology costs were derived from the distance matrix concerned
as in Section~\ref{sect.previous}.

The UPGMA method
consistently
performed worse than the other two methods in SplitsTree. 
In several trials it failed
to produce an answer at all (throwing an unhandled Java Exception), which
may be due to an implementation problem. Therefore,
attention was focused on the other two methods.
Both NJ \cite{SN87} and BioNJ \cite{BN97} are neighbor-joining methods.
In all tested cases they produced the same trees, therefore
we will treat them as the same (SplitsTree BioNJ=NJ) in this discussion.

Our MQTC heuristic  has through the Complearn
package already been extensively tested
in hierarchical clustering of nontree-structured
data as reviewed in Section~\ref{sect.prevexp}.
Therefore, we choose to run the MQTC heuristic 
and SplitsTree on  data favoring SplitsTree, that is, tree-structured data,
both artificial and natural.

\subsection{Testing on Artificial Data 100 Times}\label{sect.artificial}
We first test whether the MQTC heuristic
and the SplitsTree methods are trustworthy.
We generated 100 random samples of
an unrooted binary tree $T$ with 32 leaves as follows:
We started with a linear tree with each internal node connected
to one leaf node, a prior internal node, and a successive internal node.
The ends have two leaf nodes instead.  This initial tree was then
mutated 1000 times using randomly generated instances of the
complex mutation operation defined earlier.
Next, we derived a
metric from the scrambled tree by defining the distance between
two nodes as follows:
Given the length of the path from $a$ to $b$ in an integer number of
edges as $L(a,b)$, let
\[d(a,b) = { {L(a,b)+1} \over 32},
\]
  except when
$a = b$, in which case $d(a,b) = 0$.  It is easy to verify that this
simple formula always gives a number between 0 and 1, is monotonic
with path length, and the resulting matrix is symmetric.
Given only the $32\times 32$ matrix of these normalized distances,
our quartet method precisely reconstructed the original tree one hundred times
out of one hundred random trials. Similarly, SplitsTree NJ and
BioNJ also reconstructed each tree precisely in all trials. However UPGMA was
unable to cope with this
test.  It appears there is a mismatch of assumptions in this experimental
ensemble and the UPGMA preconditions, or there may be an error in the
SplitsTree implementation. The running time of CompLearn  without
the improvement of Section~\ref{sect.previous} was about 3 hours
per example, but with the improvement of Section~\ref{sect.previous} only
at most 5 seconds per example.  
SplitsTree had a similar but slightly higher running time.
Since the performance of CompLearn and SplitsTree (both NJ and BioNJ)
was 100\% correct on the artificial data we feel that all the methods 
except SplitsTree UPGMA perform
satisfactory on artificial tree-structured data.

\begin{figure*}
\begin{small}
\begin{tabular}{|l|l||l|l|}
\hline
Acipenser dabryanus & Yangtze sturgeon fish & Lipotes vexillifer & Yangtze river dolphin \\
Amia calva & Bowfin fish & Melanogrammus aeglefinus & Haddock \\
Anguilla japonica & Japanese eel & Metaseiulus occidentalis & Western predatory mite \\
Anopheles funestus & Mosquito & Neolamprologus brichard &  Lyretail cichlid fish \\
Arctoscopus japonicus & Sailfin sandfish & Nephila clavata & Orb web spider \\
Asterias amurensis & Northern Pacific seastar & Oreochromis mossambicus & Mozambique tilapia fish \\
Astronotus ocellatus & Tiger oscar & Oscarella carmela & Sponge \\
Cervus nippon taiouanus & Formosan sika deer & Phacochoerus africanus & Warthog \\
Cobitis sinensis & Siberian spiny loach fish & Plasmodium knowlesi & Primate malaria parasite \\
Diphyllobothrium latum & Broad tapeworm & Plasmodium vivax & Tersian malaria parasite \\
Drosophila melanogaster & Fruit fly & Polypterus ornatipinnis & Ornate bichir fish \\
Engraulis japonicus & Japanese anchovy & Psephurus gladius & Chinese paddlefish \\
Gavia stellata & Red throated diver & Pterodroma brevirostris & Kerguelen petrel \\
Gymnogobius petschiliensis & Floating goby fish & Savalia savaglia & Encrusting anemone \\
Gymnothorax kidako & Moray eel & Schistosoma haematobium & Vesical blood fluke \\
Hexamermis agrotis & Roundworm Nematode & Schistosoma spindale & Cattle fluke \\
Hexatrygon bickelli & Sixgill stingray & Synodus variegatus & Variegated lizardfish \\
Homo sapiens & Human & Theragra finnmarchica & Norwegian pollock fish \\
Hynobius arisanensis & Arisian salamander & Tigriopus californicus & Tidepool copepod \\
Hynobius formosanus & Formosa salamander & Tropheus duboisi & White spotted cichlid fish \\
Lepeophtheirus salmonis & Sea lice & & \\

\hline
\end{tabular}
\end{small}
\caption{Listing of scientific and corresponding common names of 41 (out of 45) species used. 
The remaining four are dogs, with common breed names
Chinese Crested,
Irish Setter,
Old English Sheepdog,
Saint Bernard. There are no scientific names distinguishing them, 
as far as we know.}
\label{figanimtab}
\end{figure*}

\subsection{Testing on Natural Data 100 Times}
\label{sect.nat}
In the biological setting
the data are (parts of) genomes of currently existing species,
and the purpose is to reconstruct the evolutionary tree that led
to those species. Thus, the species are labels of the leaves,
and the tree is traditionally binary branching with each branching
representing a split in lineages. The internal nodes and the root
of the tree correspond with extinct species (possibly a still
existing species in a leaf directly connected to the internal node).
The root of the tree is commonly
determined by adding an object that is known to be less related
to all other objects than the original objects are with respect to
each other. Where the unrelated object joins the tree is where
we put the root.
In this setting, the direction from the root to the leaves represents
an evolution in time, and the assumption is that there is a true
tree we have to discover.

However, we can also use the method for hierarchical clustering,
resulting in an unrooted ternary tree.
The interpretation is that objects in a given subtree are pairwise
closer (more similar) to each other than any of those objects
is with respect to any object in a disjoint subtree.

To evaluate the quality of tree reconstruction for natural
genomic data, we downloaded 45 mitochondrial
gene sequences, Figure~\ref{figanimtab},
and randomly selected 100 subsets of 32 species each.  We
used CompLearn with PPMD to compute \NCD matrices for each of the 100 trials
and fed these matrices (as Nexus files) to both CompLearn and SplitsTree.
CompLearn without the speedup in Section~\ref{sect.previous} 
took about 10 hours per tree, 
but with the speedup of  Section~\ref{sect.previous} CompLearn
takes at most 6 seconds for collections of 32 objects in 66\% of the cases,
at most 10 seconds in 90\% of the cases, and occasionally (about 10\%
of the cases)
between 10 seconds and 2 minutes.
SplitsTree used about 10 seconds per trial.
In all but one case out of 100 trials, 
CompLearn performed as good or better in the sense of
producing trees with an as good or higher S(T) score than the best method
(with UPGMA performing badly and NJ and BioNJ giving the same scores) 
from SplitsTree. The results
are shown in the histogram Figure~\ref{fig.nathisto}, which shows that 
out of 100 trials CompLearn produced a better tree in 69\% of the trials.
\begin{figure}[htb]
\begin{center}
\epsfig{figure=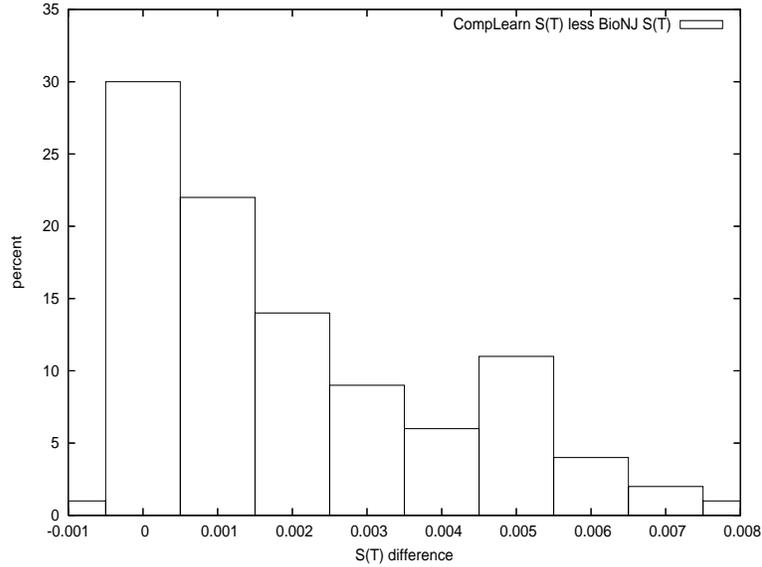,width=3in,height=4in,angle=-90} 
\caption{Histogram showing CompLearn S(T) advantage over SplitsTree S(T)}
\label{fig.nathisto}
\end{center}
\end{figure}
CompLearn had an {\em average} $S(T)$ of 0.99487068.  
SplitsTree achieved the {\em best}
$S(T)$ with both NJ and BioNJ at 0.99243944.  At this high level the absolute
magnitude of the difference is small, yet it can still imply significant
\begin{figure*}
\begin{center}
\epsfig{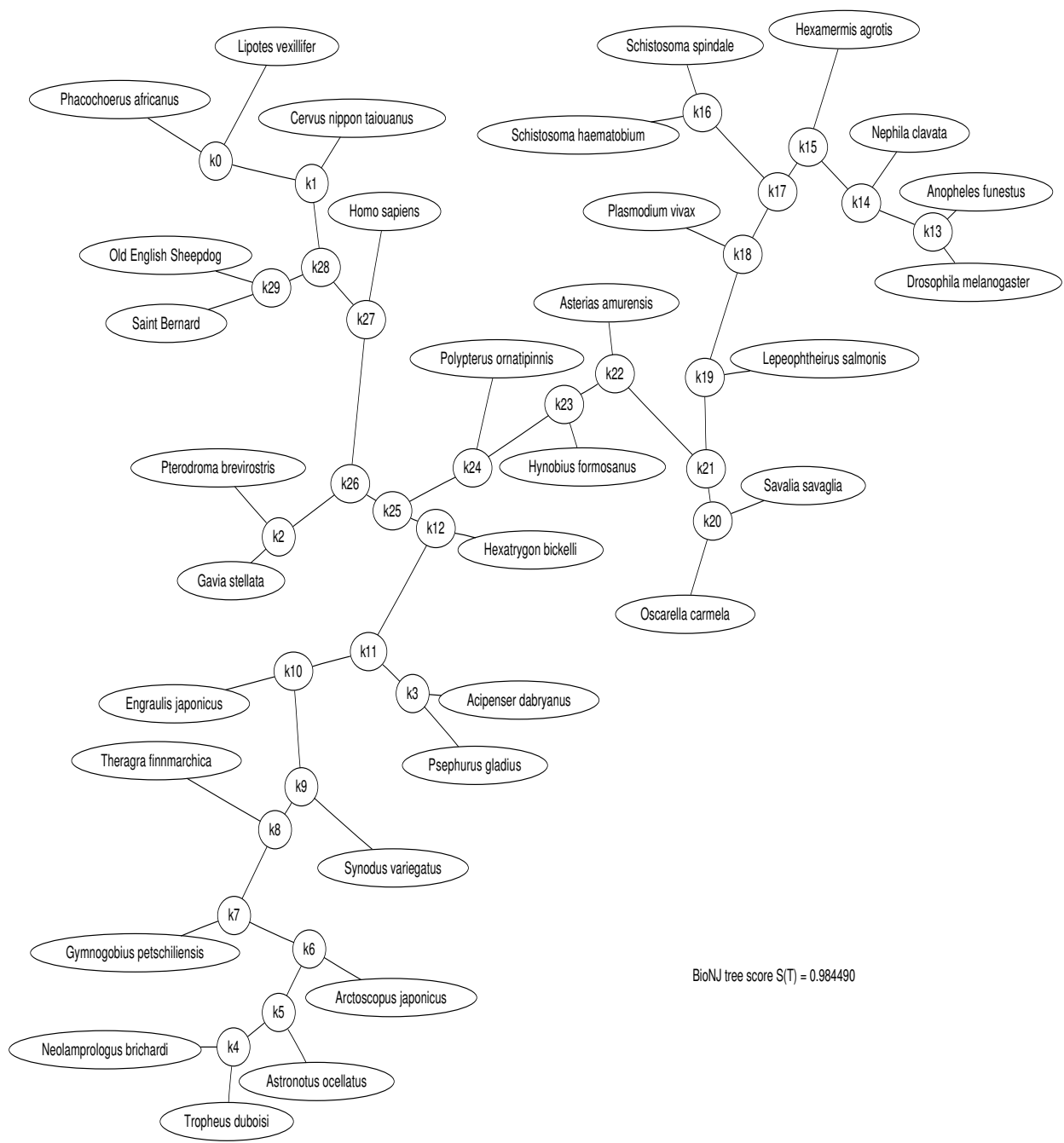} 
\caption{NJ=BioNJ tree from SplitsTree}
\label{fig.bt}
\end{center}
\end{figure*}
changes in the structure of the tree.  Figure~\ref{fig.bt} and Figure~\ref{fig.clt} depict one example showing both BioNJ=NJ and CompLearn trees applied to
the same input matrix from one of the natural data test cases described
above.  In this case there are important differences in placement of at
least two species; {\em Hexatrygon bickelli} and {\em Synodus variegatus}.

\begin{figure*}
\begin{center}
\epsfig{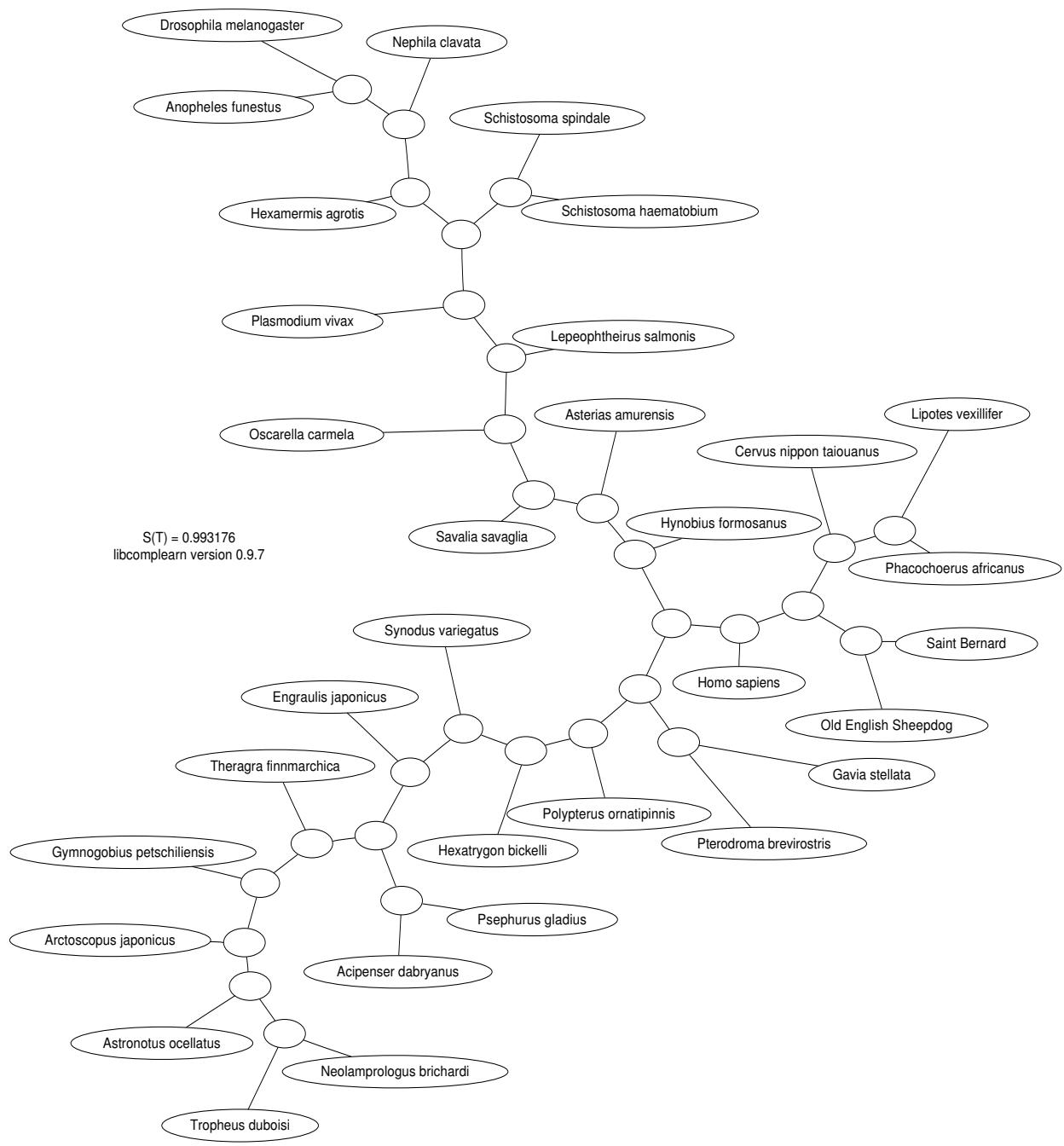} 
\caption{CompLearn tree for comparison with previous Figure}
\label{fig.clt}
\end{center}
\end{figure*}

Although we can not know for sure the true maximum value that can be attained
for the $S(T)$, given an arbitrary distance matrix, we can still define a useful
quantity. Let
\begin{equation}\label{eq.rt}
R(T) = 1.0 - S(T),
\end{equation}
and term $R(T)$ the {\em room for improvement} for tree $T$. This is especially
apt in cases like the present one where we know that the 
optimal tree $T_{\rm opt}$
has $S(T_{\rm opt})$
close to 1. Suppose CompLearn produces tree $T$ in trial $t$
and SplitsTree produces tree $T'$ in trial $t$. Define
$R_C(t)= R(T)$ 
and 
$R_S(t)=R(T')$ using \eqref{eq.rt}.
We can compute the decibel gain $db(t)$ as the logarithm of the ratio of
room for improvement in trial $t$ of
SplitsTree's answer versus CompLearn's answer with the formula
\begin{equation}\label{eq.db}
 db(t) = 10 \log_{10} \frac{R_S(T)}{R_C(T)}. 
\end{equation}
Hence if $db(t)=1$ then $R_S(T) =10^{1/10} R_C(T) \approx 1.3 R_C(T)$, 
and $db(t)=2$ means that $R_S(T) =10^{1/5} R_C(T) \approx 1.6 R_C(T)$. 
This is statistically significant according to almost every reasonable criterion.
Note that the room for improvement decibel gain $db(t)$ 
in \eqref{eq.db} represents also a conservative
estimate of the true improvement decibel gain in real error terms.
This is because the true
maximum $S(T)$ score of a tree $T$ resulting
from a distance matrix is always less than or equal to 1. Using the
$S(T_{\rm opt})$ value of the real optimal tree $T_{\rm opt}$ instead of 1 would
only make the gain more extreme.  We plot the decibel room for improvement
gain in Figure~\ref{fig.edeci}, using different binning 
boundaries than in Figure
\ref{fig.nathisto}. On the horizontal axis the bins are
displayed where for every trial $t$ we put $db(t)$ in the appropriate bin. 
On the vertical
axis the percentage of the number of elements
in a particular bin to the total is depicted.

Because now we use different boundaries for each bin, the percentage
of trials
with the same room for improvement for both CompLearn
and SplitsTree is slightly higher than the
percentage of trials with the same $S(T)$ values between CompLearn
and SplitsTree in Figure~\ref{fig.nathisto}. Yet now we can see 
the important difference in room for improvement between CompLearn
and SplitsTree expressed in decibels.
Thus, about 38\% of the CompLearn trials gives no positive integer decibel
reduction in room for improvement
over the SplitsTree performance (and 1\% gave a negative reduction). 
About 27\% gives a 1db reduction
in room for improvement,
about 22\% gives a 2db reduction in room for improvement, 
about 10\% gives a 3db in room for improvement.
Overall, about 61\% of the CompLearn trials
gives a  1 or more decibel reduction in  room for 
improvement over the SplitsTree performance.
In more than 1/3 of the
trials CompLearn achieves at least a 2db reduction in room for improvement as
compared to SplitsTree.
\begin{figure}
\begin{center}
\epsfig{figure=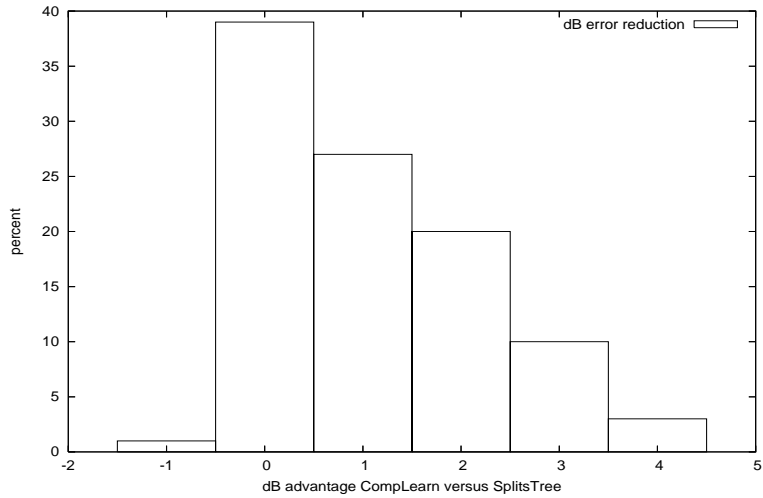,width=5in,height=5in,angle=0} 
\caption{Decibel error reduction from CompLearn}
\label{fig.edeci}
\end{center}
\end{figure}

\section{Conclusion}
We have introduced a new quartet tree problem, 
the Minimum Quartet Tree
Cost (MQTC) problem, suited for general
hierarchical clustering. This new method relies on global optimization
of the constructed tree in contrast to bottom-up or top-down
methods that can get stuck
in local optima, such as Quartet Puzzling, neighbor joining, and the like.
 Is is shown that this MQTC problem is 
NP-hard by a reduction to the (weighted) Maximum Quartet Consistency
(MQC) problem that is more suited for the restricted
case of biological phylogeny.
Moreover, if there is a polynomial time approximation scheme (PTAS)
for the MQTC optimization problem, then P=NP.
Given the hardness of the MQTC problem we introduce a Monte Carlo
heuristic based on randomized hill climbing. This heuristic runs in 
time which is theoretically $\Omega(n^4)$ per generation 
where $n$ is the number of objects, 
and $O(n^5)$ per generation in the implemented version.
The improvement
in Section~\ref{sect.previous} based on the distance matrix and
quartet topology costs in \eqref{eq.costs} runs in time $O(n^3)$ per generation both 
as algorithm and implementation.
The new method including the
improvement is available 
for general use in the open software CompLearn Toolkit \cite{Ci03} from
version 1.1.3 onward.
It has been used widely for general hierarchical clustering
and also for biological phylogeny. Here, we tested our MQTC heuristic
on artificial data and natural data, and compared it with the neighbor-joining
method available in the (highly competitive) 
SplitsTree package (version 4. 6) designed for tree-structured
data in Biological Phylogeny. 
(BioNJ and NJ in the SplitsTree package always gave the same
results in our experiments, so we treat them as one,
and the UPMG method in the SplitsTree
package did
not work for us.)
To make the comparison more disadvantageous to our MQTC heuristic
 and more
advantageous to SplitsTree we tested it on tree-structured data, rather than
general hierarchical clustering on data of unknown structure.
SplitsTree was generally slower 
(sometimes 10 seconds versus 6 seconds for 
CompLearn version 1.1.3 and later, 
that is, 2/3rd more)
than our MQTC heuristic with the
improvements in Section~\ref{sect.previous}. 
On our artificial data experiments both our MQTC heuristic and the SplitsTree methods
gave 100\% correct results. On the natural data experiments the 
{\em average case}
of our MQTC heuristic was better than the {\em best case} of the SplitsTree
heuristics. 
To amplify the differences we compared the decibel gain in room for
improvement of SplitsTree's answers versus our MQTC heuristic's answers.
In 61\% of the trials our MQTC heuristic's performance gave a  positive
integer decibel reduction in room for improvement
over SplitsTree's performance, and in 33\% of the trials 
our MQTC heuristic's performance gave a 2db reduction in room for improvement
over SplitsTree.
Other heuristics
for the MQTC optimization problem are recently given in \cite{CDGKP08}.
But even the best method in \cite{CDGKP08} has
a slower running time for natural data (with  $n=32$ typically about 50\%)
than the implementation of the MQTC heuristic in 
CompLearn from version 1.1.3 onward.

\appendix
\subsection{Sufficiency of the Set of Simple Mutations}
\label{sect.mutations}

\begin{theorem}\label{theo.mutations}
Every  ternary tree with $n$ leaves
labeled $l_1,l_2, \ldots , l_n$
and $n-2$ unlabeled internal nodes can be
transformed in every other 
ternary tree with $n$ leaves
labeled $l_1,l_2, \ldots , l_n$ and $n-2$ unlabeled internal nodes
by a sequence of $f(n)$ mutations consisting of subtree to leaf swaps
or leaf to leaf swaps where
$f(3) \leq 3$ and 
$f(n) \leq 5n-16$ for $n \geq 4$.
\end{theorem}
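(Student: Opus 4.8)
The plan is to induct on the number of leaves $n$, reducing an $n$-leaf instance to an $(n-1)$-leaf instance by contracting a common \emph{cherry} (a pair of leaves sharing an internal node). Fix two ternary trees $T,T'\in{\cal T}$ on the labelled leaf set $\{l_1,\ldots,l_n\}$. The base case is $n=3$, where ${\cal T}$ contains the single star with one internal node adjacent to all three leaves, so $f(3)=0$. For the inductive step I would (a) choose any cherry $\{l_a,l_b\}$ of the target tree $T'$, (b) create that same cherry in $T$ using a single interchange, and then (c) contract the cherry in both trees, obtaining ternary trees $\bar T,\bar T'$ on the $n-1$ labels $(\{l_1,\ldots,l_n\}\setminus\{l_a,l_b\})\cup\{\ell\}$, where $\ell$ is a fresh label for the contracted cherry. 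Applying the induction hypothesis to $\bar T,\bar T'$ and lifting the resulting moves back to the full trees completes the transformation.

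The crux is step (b): showing that any prescribed cherry can be produced by \emph{one} simple mutation while respecting the distance-$\ge 3$ requirement of the subtree/leaf interchange. Let $x$ be the internal neighbour of $l_a$ in $T$; its other two incident edges lead to two subtrees, exactly one of which, say the one rooted at a neighbour $a_1$ of $x$, contains $l_b$ (the remaining direction from $x$ is the pendant leaf $l_a$ itself). Let $a_2$ be the root of the other subtree $B$. If $l_b$ is already adjacent to $x$ the cherry exists and no move is needed; otherwise the path from $a_2$ to $l_b$ runs $a_2 - x - a_1 - \cdots - l_b$ and has length $2+\mathrm{dist}(a_1,l_b)\ge 3$, so interchanging the subtree $B$ (rooted at $a_2$) with the leaf $l_b$ is a legal simple mutation; after it $x$ is adjacent to both $l_a$ and $l_b$, i.e.\ $\{l_a,l_b\}$ is a cherry. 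This move is always a subtree-to-leaf (or, when $B$ is a single leaf, leaf-to-leaf) interchange, exactly as the statement permits.

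For the lifting in step (c) I would check that a legal interchange on $\bar T$ corresponds to a legal interchange on the full tree: contraction does not change the edge-distance between any two surviving nodes, and the super-leaf $\ell$ sits at the position of the contracted internal node, so the distance-$\ge 3$ condition transfers verbatim, a move touching $\ell$ simply carrying the whole two-leaf cherry as a single subtree. This yields the recursion $f(n)\le f(n-1)+1$, hence $f(n)\le n-3$ for $n\ge 4$; since $n-3\le 5n-16$ precisely when $n\ge 4$, the stated bound $f(n)\le 5n-16$ (and $f(3)\le 3$) follows a fortiori. The only genuinely delicate point is the bookkeeping around the distance constraint---both verifying it for the cherry-creating move and confirming it is preserved under contraction---and, if one insists on the two literal move types named in the statement, observing that a lifted move whose leaf side is the super-leaf $\ell$ becomes a subtree--subtree interchange, which is itself a simple mutation (the combined ``subtree and/or leaf interchange''). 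The explicit factor-$5$ bound in the statement is thus comfortably loose for this argument.
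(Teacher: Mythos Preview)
Your approach is genuinely different from the paper's. The paper does not contract a cherry; it contracts a specific three-node configuration (an end internal node with two leaves together with its neighbour carrying one extra leaf), applies induction, and then re-expands via a three-case analysis, spending up to five moves per inductive step. Your cherry-contraction is considerably cleaner and, if it went through, would give $f(n)\le n-3$ rather than $5n-16$.

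But there is a real gap in the lifting step, and your own parenthetical identifies it without resolving it. When a move in $\bar T$ has the super-leaf $\ell$ on its \emph{leaf} side and a proper subtree on the other, the lifted move in the full tree swaps two subtrees---a subtree-to-subtree interchange. The theorem, however, restricts to subtree-to-leaf and leaf-to-leaf swaps only, and the paper's Remark immediately after the proof stresses exactly this: those two types suffice, while subtree-to-subtree swaps and subtree transfers are ``superfluous in terms of completeness''. Appealing to the combined ``subtree and/or leaf interchange'' therefore does not prove the statement as written. Nor can the difficulty always be dodged by clever cherry selection: under repeated contraction the super-leaves can represent large subtrees, and for targets such as the balanced tree on eight leaves one reaches a level at which \emph{every} cherry of the contracted target consists of two super-leaves, so some lifted move is necessarily subtree-to-subtree. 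Thus the induction you set up actually proves $f^{*}(n)\le n-3$ for the \emph{unrestricted} move set, not $f(n)\le n-3$ for the restricted one.

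The gap is repairable within the stated budget: any subtree-to-subtree interchange can be simulated by three subtree-to-leaf interchanges (pick a genuine leaf $l$ inside one of the two subtrees and route through it: swap the other subtree with $l$, then swap the enlarged first subtree with $l$, then swap the misplaced inner subtree with $l$ again), and this yields $f(n)\le 3(n-3)\le 5n-16$ for $n\ge 4$. But that repair is not in your write-up, so as it stands the argument does not establish the theorem.
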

\begin{proof}
For convenience of
the discussion we attach labels to
the internal nodes, but actually the internal nodes are unlabeled,
only the leaves are labeled..
%
The proof is by induction on the number of nodes. 

{\em Base case: $n=3$.} There is one internal node, so the theorem is vacuously
true. For $n=4$ there are two internal nodes, so the theorem is
true as well using at most one leaf swap.

{\em Induction}: Assume the theorem is correct for every $k$ with 
$4 \leq k < n$. We prove that it holds for $k=n$.
For $n>4$ consider a ternary tree $T_0$ with $n-2$ unlabeled internal nodes 
$1,2,\ldots,(n-2)$ and $n$ labeled leaves that
has to be transformed into a ternary tree $T_1$ with the same
unlabeled internal nodes and labeled leaves. 

Assume that the initial tree $T_0$ has a path $z-x-y$ where $y$ is
an end internal node with two leaves and $x$ is an internal node with
one leaf. If $T_0$ is not of that form then we make it of that
form by a subtree to leaf swap: Take another end internal node $u$
(possibly $z$) and swap the 3-node tree rooted at $u$ ($u$ and its
two leaves) with a leaf of $y$.
This results in a path $x-y-u$ where $u$ is an end internal
node with two leaves and $y$ is an internal node with a single
leaf. We start from the resulting tree which we call $T_0$ now.
 
For the sake of the argument we number the nodes so that 
$n-2$ is an end internal
node connected to an internal node $n-3$ which has
a single leaf $l$.
Glue the internal nodes $n-3$ and $n-2$ and
the leaf $l$ together
in a single internal node
now denoted as $n-3$. The new $n-3$ is an end internal
node connected to two leaves formerly connected to the
old $n-2$.
This results in an $n-3$ unlabeled 
internal node ternary tree $T'_0$ with $n-1$ labeled leaves. 

By the induction assumption  we can
transform $T'_0$ into any ternary tree $T'_1$ with 
$n-3$ unlabeled internal nodes and $n-1$ labeled leaves in $f(n-1)$
subtree to leaf or leaf to leaf mutations. Take $T'_1$ to be
a subtree of 
$T_1$ with the following exceptions. 
Since $T_1$ has one more internal node
than $T_1'$ 
we can choose that extra internal node as an end internal node
attached to tree $T_1'$ at the place where there is now a leaf.
Let that leaf be leaf $l'$. 
Note that leaf $l$ is not a leaf of $T_1'$ (since
it is incorporated in internal node $n-3$). If $l$
should be a leaf of $T'_1$ to make it a subtree of $T_1$ 
then
we swap $l$ with the leaf $l''$ in the place where $l$
has to go in a leaf-to-leaf swap. For convenience we still
denote the leaf
left in the composite node $n-3$ by $l$.

Now expand in $T'_1$ the
internal node $n-3$ into the path $(n-3)-(n-2)$ 
together with leaf $l$ connected
to $(n-3)$. This yields a ternary tree with $n-2$ unlabeled internal 
nodes and $n$ labeled leaves.
There are three cases. 

{\bf Case 1.} Initially, in $T'_1$ the node $n-3$ is an 
end internal node connected to 
an internal node  $u$ as in the path $u-(n-3)$. The expansion takes us to
the situation that we have a path $u-(n-3)-(n-2)$ and leaf $l$ connected
to $n-3$.

The old internal node $n-3$ being an end internal
node had two leaves. In the path $(n-3)-(n-2)$
both these leaves stay connected to the new end internal node $n-2$
and leaf $l$ stays connected to $n-3$. 

Assume first that $l'$ is not
in de 5-node subtree rooted at the new $n-3$ containing the path $(n-3)-(n-2)$. 
We interchange this 5-node subtree
with
the leaf $l'$.
Next, we interchange
the 3-node subtree rooted at $n-2$ with the leaf $l'$ at its new location.
In this way, $n-3$ being now in the former
position of leaf $l'$ is the missing internal node of $T_1$.
The new internal node $n-3$ is an end internal node with
 two leaves $l,l'$ of which
$l'$ is in the correct position. There is still
the leaf $l''$ being possibly in the wrong position.
All the other leaves are in the correct position for $T_1$.
After we swap the leaves $l,l''$ if necessary, all leaves
are in the correct position.

Assume second that $l'$ is 
in the subtree rooted at $n-3$ containing the path $(n-3)-(n-2)$.
Then, the new $n-2$ being an end internal node in the former
position of leaf $l'$ is the missing internal node of $T_1$.

The total number of mutations used is at most three consisting of two
subtree to leaf swaps and possibly one leaf to leaf swap.

{\bf Case 2.} Initially, the node $n-3$ in $T_1'$ is connected to two
internal nodes yielding a path $u-(n-3)-v$ such that $n-3$ is connected also to
one leaf, say $l'''$.  The expansion takes us to
the situation that we have a path $u-(n-3)-(n-2)-v$.
The old internal node $n-3$ was connected to leaf $l'''$ 
which leaf is now connected to $n-2$. The leaf $l$
is still connected to the new $n-3$.

Assume first that $l'$ is not in the subtree 
rooted at the new $n-3$ (containing $u-(n-3)$). 
We interchange the subtree rooted at $n-3$ (containing $u-(n-3)$)  with leaf $l'$.
Next we interchange the subtree rooted at $u$ (not containing
$n-3$ and leaf $l$) with $l'$ again. Now the new $n-3$ takes the place of the missing
internal node of $T_1$ and it is an end internal node connected
to leaves $l,l'$. Of these, leaf $l'$ is in correct position. 
All the other leaves except possibly $l,l''$ are in correct position
for $T_1$. If necessary we interchange leaves $l,l''$.

Assume second that $l'$ is in the subtree 
rooted at the new $n-3$ (containing $u-(n-3)$).
Interchange the subtree rooted at $n-3$ (containing $(n-3)-(n-2)$)  
with leaf $l'$. Next we interchange the subtree rooted at $n-2$ (not containing
$n-3$ and leaf $l$) with $l'$ again.
Now the new $n-3$ takes the place of the missing
internal node of $T_1$ and it is an end internal node connected
to leaves $l,l'$. Of these, leaf $l'$ is in correct position.
All the other leaves except possibly $l,l''$ are in correct position
for $T_1$. If necessary we interchange leaves $l,l''$.

The total number of mutations used is at most three consisting of two
subtree to leaf swaps and possibly one leaf to leaf swap.

{\bf Case 3.}  Initially, in $T_1'$ the node $n-3$ is connected to three
internal nodes forming the path $u-(n-3)-v$ and there is a path $w-(n-3)$
with $w \neq u,v$.
The expansion yields the path $u-(n-3)-(n-2)-v$ with leaf $l$ connected to
$n-3$ and $n-2$ is also in a path $w-(n-2)$.

Assume first that $l'$ is not in the subtree
rooted at the new $n-3$ (containing $u-(n-3)$).
We interchange the subtree rooted at the new $n-3$ containing the edge
$u-(n-3)$ and leaf $l$ with the leaf $l'$. 
Subsequently, we interchange
the subtree rooted at $u$ (not containing $n-3$ and the connected
leaf $l$) with $l'$ again. Now node $n-3$ is in the position of
the missing internal node of $T_1'$
and it is an end internal node with two leaves $l,l'$.
Of these, $l'$ is in correct position. Moreover, all the other
leaves are in correct position except possibly $l,l''$.
 If necessary we interchange leaves $l,l''$.

Assume second that $l'$ is in the subtree
rooted at the new $n-3$ (containing $u-(n-3)$).
Interchange the subtree rooted at the new $n-3$ containing the edge
$(n-3)-(n-2)$ and leaf $l$ with the leaf $l'$.
Subsequently, we interchange
the subtree rooted at $n-2$ (not containing $n-3$ and the connected
leaf $l$) with $l'$ again. Now node $n-3$ is in the position of
the missing internal node of $T_1'$
and it is an end internal node with two leaves $l,l'$. 
Of these, $l'$ is in correct position. Moreover, all the other
leaves are in correct position except possibly $l,l''$.
 If necessary we interchange leaves $l,l''$.

The total number of mutations used is at most three consisting of two
subtree to leaf swaps and possibly one leaf to leaf swap.

We count the number of mutations as follows. Initially, tree $T_1'$
required at most $f(n-1)$ mutations to be obtained from tree $T_0'$.
By the above analysis $f(n) \leq f(n-1)+5$ (remember the possibly
necessary initial subtree to leaf swap to bring $T_0$ in t
he required form, and the possibly ncessary leaf to leaf swap
between $l$ comprised in the composite node $n-3$ and $l''$
just before {\bf Case 1}).
The base case shows that $f(3) \leq 3$ and $f(4) \leq 4$.
Hence, $f(n) \leq 4+(n-4)5=5n-16$ for $n > 4$.
\end{proof}
\begin{remark}
\rm
Note that the only mutations used are leaf-to-leaf swaps and subtree
to leaf swaps. This shows that the other mutations, that is subtree
to subtree swaps, and subtree transfers are superfluous 
in terms of completeness. However, they may considerably reduce the number
of total mutations required to go from one tree to another. 
Using the full set of mutations we believe it is possible to go from
a ternary tree as above to another one
in at most $n$ mutations as given.
\end{remark}
 
\section*{Acknowledgement}
We thank Maarten Keijzer for the improvements in the heuristic and its 
implementation described in Section~\ref{sect.previous}.

\begin{small}

\end{small}

\end{document}